\definecolor{mydarkblue}{rgb}{0.03,0.2,0.4}
\newtheorem{theorem}{Theorem}[section]
\newtheorem{lemma}{Lemma}
\newtheorem{assumption}{Assumption}
\newtheorem{definition}{Definition}
\newtheorem{corollary}{Corollary}
\DeclareMathAlphabet{\mathbsf}{OT1}{cmss}{bx}{n}
\DeclareMathAlphabet{\mathssf}{OT1}{cmss}{m}{sl}
\title{Properties from Mechanisms:  An Equivariance Perspective on Identifiable Representation Learning}
\author{%
Kartik Ahuja$^*$, Jason Hartford\thanks{equal contribution, author order selected randomly.} \& Yoshua Bengio  \\
Mila - Quebec AI Institute, Universit\'e de Montr\'eal\\
Quebec, Canada \\
\texttt{\{kartik.ahuja,jason.hartford,yoshua.bengio\}@mila.quebec}
}
\date{}
\begin{document}
\maketitle

\begin{abstract}
A key goal of unsupervised representation learning is ``inverting'' a data generating process to recover its latent properties.  Existing work that provably achieves this goal relies on strong assumptions on relationships between the latent variables (e.g., independence conditional on auxiliary information). In this paper, we take a very different perspective on the problem and ask,  ``Can we instead identify latent properties by leveraging knowledge of the mechanisms that govern their evolution?'' We provide a complete characterization of the sources of non-identifiability as we vary knowledge about a set of possible mechanisms. In particular, we prove that if we know the exact mechanisms under which the latent properties evolve, then identification can be achieved up to any equivariances that are shared by the underlying mechanisms. We generalize this characterization to settings where we only know some hypothesis class over possible mechanisms, as well as settings where the mechanisms are stochastic. We demonstrate the power of this mechanism-based perspective by showing that we can leverage our results to generalize existing identifiable representation learning results.
These results suggest that by exploiting inductive biases on mechanisms, it is possible to design a range of new identifiable representation learning approaches.

\end{abstract}

\section{Introduction}

Modern unsupervised representation learning techniques can generate images of our world with intricate detail \citep[e.g.][]{karras2019style, song2020score, razavi2019generating}, and yet, the latent representations from which these images are generated remain entangled and challenging to interpret \citep{causal_representation, locatello2019challenging}.
At the same time, the success of pre-trained of transformers \citep{devlin2018bert, gpt3} shows that advances in our ability to disentangle the underlying generative factors can lead to dramatic improvements in the sample complexity of downstream supervised tasks \citep{Bengio+chapter2007,bengio2013representation}.
In order to consistently replicate this success, we need methods that can reliably invert the data generating process. This is a challenging task because unsupervised representation learning with independent and identically distributed (IID) data 
is hopelessly \emph{unidentified}:  even in the nice case in which observations are generated via some $p(x|z)$, there exists an infinitely large class of possible latent distributions $\tilde{p}(z|x)$ that are consistent with the observed marginal distribution, $p(x)$, and only one of them corresponds to the true latent distribution $p(z|x)$ \citep{locatello2019challenging, khemakhem2020variational}.

If we want to build systems which are able to provably \emph{identify}\footnote{A problem is identified if there exists a unique solution in the infinite data limit and no constraints on model capacity.} the true latent variables $z$ that generated our observed data $x = g(z)$ for some observation model $g$, then we need to exploit structural assumptions that constrain this set of possible distributions. Most of the prior work that can provide such guarantees was developed in the independent component analysis (ICA) literature. The key ICA assumption is that the latent factors are (conditionally) independent and non-Gaussian. Then, if the observation model, $g: \mathcal{Z} \rightarrow \mathcal{X}$ is linear and invertible, $z = g^{-1}(x)$ is identified \citep{comon1994independent}, and for nonlinear $g$ there are a number of recent approaches that leverage non-stationarity in the distributions over $z$ to identify $g^{-1}$ \citep{hyvarinen2016unsupervised, hyvarinen2017nonlinear, hyvarinen2019nonlinear, khemakhem2020variational}.
These results give a tantalizing demonstration that representation learning with identification guarantees is possible, but the requirement that the latent factors are statistically (conditionally) independent is limiting\footnote{
As a simple example of dependence between latent variables, assume the bouncing balls shown in Figure \ref{fig:spring_system} have different masses indicated by their colors. If the initial conditions were such that all the balls have the same momentum, then mass and velocity will be inversely correlated. 
} \citep{higgins2018towards, causal_representation}.

\begin{figure}[t]
    \centering
    \includegraphics[width=\textwidth]{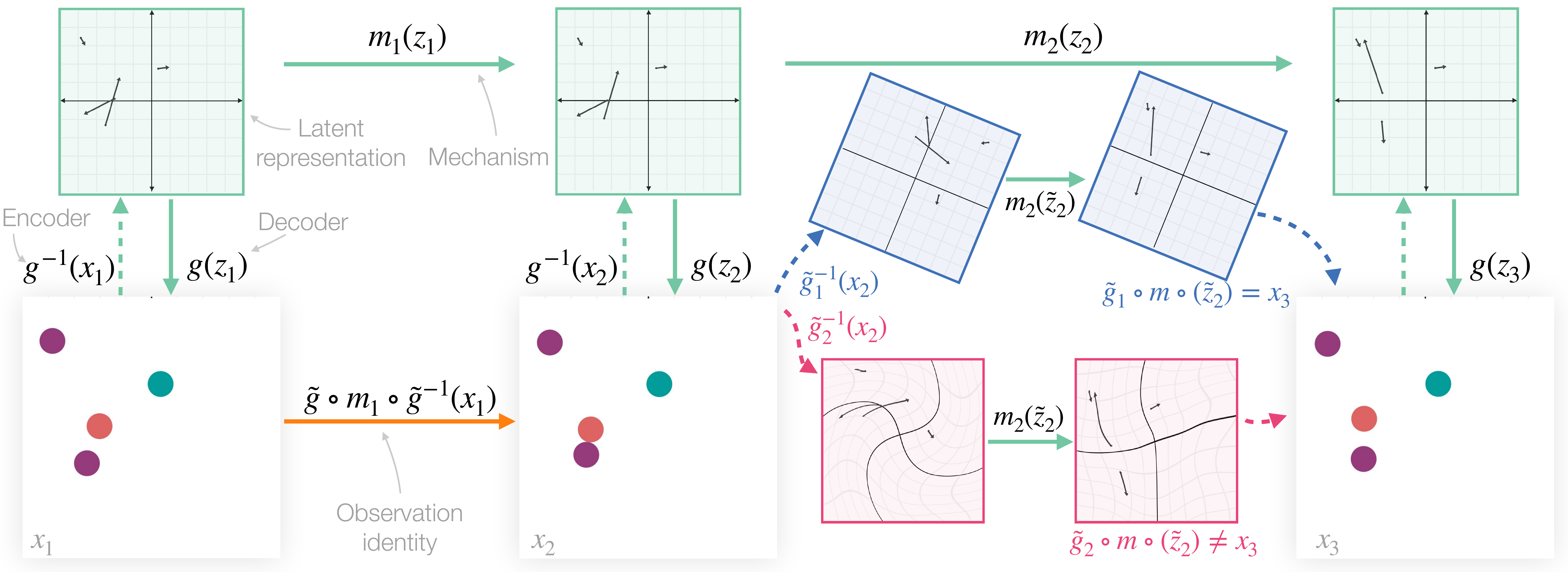}
    \caption{This simple data generating process illustrates that if we know the set of mechanisms that govern the evolution of an environment, this constrains the set of possible representations to any equivariances of these mechanisms. At each time step, we observe an environment of bouncing balls in pixel space as images, $x_t$. These images are produced by some rendering engine, $g$, as a function of the true latent representation $z_t$, which in this case gives positions and velocities of each ball (illustrated by the location and length of the arrows in the latent representation). At each time step, the state evolves according to a mechanism, $m_t$. Any candidate model that is consistent with the observed data has to satisfy the observation identity. If an encoder produces either the true representation (shown in green), or a representation transformed by some equivariance of the mechanism (e.g. the blue rotated representation) then the observation identity is satisfied. However, models that produce representations that are arbitrary transformations of the true representation (e.g. the pink warped representation) can be discarded as they are not consistent with the observation identity.}
    \label{fig:spring_system}
\end{figure}
In this paper we take a very different approach. We study how the mechanisms that govern an environment's evolution constrain the set of possible latent representations that are consistent with the data. 
As a simple concrete example, consider the bouncing ball environment shown in Figure \ref{fig:spring_system}. The latent state can be completely described by a vector, $z$, containing the position, velocity and acceleration of the balls\footnote{A complete generative model of these images would also need to track the colors and shapes of the elements; we will return to this issue in the discussion of the results.}, and given this latent state, the images, $x$, shown in Figure \ref{fig:spring_system}, are produced via some rendering engine $g: \mathcal{Z}\rightarrow\mathcal{X}$.
Our task is to leverage sequences of observations $x_1, \dots, x_T$ and knowledge of $m$ to recover $z_1, \dots, z_T$. 
If we can show that this task has a unique solution that is consistent with the observations and mechanisms, then the problem is identified.

Our main result is that when we know the true mechanism, $m$, the system is identified up to any equivariances of the mechanism; or equivalently, in Section \ref{sec:know_m} we prove that we can identify $z$ up to any invertible transformation $a:\mathcal{Z}\rightarrow\mathcal{Z}$ that commutes with $m$, such that the composition $m\circ a = a \circ m$. For example, in the bouncing balls environment shown in Figure \ref{fig:spring_system}, the laws of physics 
are equivariant with respect to your choice of units of measurement---changing from representing $z$ in meters to inches leaves the output of the mechanism unchanged up to a corresponding unit change---and hence we can only hope to identify $z$ up to some scaling function $a(z)$ which corresponds to an arbitrary choice of units of measurement. Interestingly, when the environment evolves according to multiple known mechanisms, the sources of non-identifiability are even further constrained: such a system is identified up to  equivariances that are shared by all $n$ mechanisms.

Perfectly knowing the true mechanisms and when they are applied is unlikely, but in Section \ref{sec:unknown-m} we show that we can relax that assumption to a setting where we instead know a hypothesis class of possible mechanisms that could have been applied. 
This weaker assumption leads to an additional source of non-identifiability: the mechanisms in our hypothesis class can \emph{imitate} each other if there exists an invertible transformation $a$ such that for any two mechanisms $m_1$ and $m_2$ in our hypothesis class, $m_2 = a^{-1}\circ m_1 \circ a$. 
For example, if we are in a setting where our  hypothesis class includes both a product mechanism, $m_1(z) = \prod_i z_i$, and a sum mechanism, $m_2(z)=\sum_i z_i$, then $m_1$ can imitate $m_2$ if $a(z) = \exp(z)$, since $\sum_i z_i = \log(\prod_i \exp(z_i))$. 
As before, this result is complete, in the sense that these two sources of non-identifiability---equivariance and imitation---are the only sources of non-identifiability in such systems.
This gives us a natural way of thinking about the way knowledge of deterministic mechanisms constrains a representation learning task: with complete knowledge, the only source of non-identifiability is any equivariances inherent in the mechanism, 
but by allowing a hypothesis class of possible mechanisms, we introduce potential non-identifiability via imitation. 
That said, the relationship between the size of the hypothesis class and the size of the set of $a$'s that commute via imitation is not necessarily monotonic: for environments governed by multiple mechanisms, a larger hypothesis class can lead to fewer imitations. 

Section \ref{sec:stochastic}, shows that we can derive analogous results for stochastic mechanisms, $m(z, U)$, where the mechanism defines a conditional distribution $p(z_{t+1}|z_t)$. This generalization gives us a way of comparing our mechanism-based perspective with existing identifiability results.
We demonstrate this in Section \ref{sec:known_results} by showing that it is possible to view the distributional assumptions made in \citet{klindt2020towards} as a particular choice of mechanism, and by doing so, we can leverage our theory to give alternative proofs of these results. This strategy required weaker distributional assumptions, thereby generalizing their result. Finally, we give a mechanism-based perspective on the related work in Section \ref{sec:related} and Section \ref{sec:disccusion} concludes with a discussion of the open problems that need to be addressed in order to reliably leverage this approach in practice.

%

\section{Mechanism based identification}

\paragraph{Data generation process}
Throughout this paper, we assume that the state of the system at time $t \in \{1, \cdots, T\}$ is given by $z_t \in \mathcal{Z} \subseteq \mathbb{R}^d$. 
At each time $t$, 
we observe $g(z_t) = x_t \in \mathcal{X} \subseteq \mathbb{R}^{n}$, which is some transformation of the latent $z_t$. We can think of $g: \mathcal{Z}\rightarrow \mathcal{X}$ as a function that transforms the (typically low dimensional) state variables to the (typically high dimensional) observed variables; for example, in the bouncing ball environment described in the introduction, $g$ is the rendering engine that produces the images shown in Figure \ref{fig:spring_system}. We assume $g$ is injective with respect to $\mathbb{R}^n$---i.e. $g(z_1) = g(z_2)$ implies $z_1 = z_2$; or equivalently, any change to the underlying state, $z$, is reflected in some pixel-level change to the observation $x$---and we make $g$ bijective by restricting its inverse $g^{-1}$ to any $x$ on the data manifold which we denote $\mathcal{X}$ (i.e. $\mathcal{X}$ is the image of $g$).
The state transition from time $t$ to $t+1$ is governed by a mechanism $m_t:\mathcal{Z} \rightarrow \mathcal{Z}$. There may be multiple mechanisms in a given environment. For example in Figure \ref{fig:spring_system} the transition from $z_1$ to $z_2$ does not involve any collisions so the state evolves according to Newton's first law of motion \citep{newton}; the transition from $z_2$ to $z_3$ involves a collision between two of the balls that is described by Newton's third law. 
Together $m_t$ and $g$ describe the data-generation from time $t$ to $t+1$ as follows, 
\begin{equation}
x_t \leftarrow g(z_t), \qquad z_{t+1} \leftarrow m_t(z_t).
\label{eqn: dgp}
\end{equation}
If we fix the initial conditions, $z_0$, this data generation process is deterministic. 
In Section~\ref{sec:know_m}, we provide results when the underlying mechanism is known and in  Section \ref{sec:unknown-m}, we 
we extend those results to the case when the mechanisms are not known. In Section \ref{sec:stochastic}, we extend
our results to stochastic 
mechanisms, $m(Z_t, U)$ that take samples from some distribution $U\sim\text{Uniform}(0, 1)$ as input. 
These stochastic mechanisms  can represent any conditional distribution  $P(Z_{t+1} | Z_t)$ over states.\footnote{For example, if $Z_{t+1} | Z_t \sim \mathcal{N}(Z_t, \sigma)$ then $m(z, U) = z + \sigma \Phi^{-1}(U)$ where $\Phi$ is the cumulative density of a standard Gaussian. Strictly, representing a joint $(Z_{t+1}, Z_{t})$ as $(m(Z_t, U), Z_t)$ requires a restriction to nice spaces. See \citet[][Lemma 3.1]{Austin_2015}.}


\subsection{Identifying encoders when the underlying mechanism is known} 
\label{sec:know_m}

We begin in the simplest version of the system described by equation (\ref{eqn: dgp}): assume that at each time $t$ the same mechanism $m:\mathcal{Z} \rightarrow \mathcal{Z}$ is used, and we know this mechanism. From these assumptions we can derive an identity that describes how the observations $x_t$ and $x_{t+1}$ are related,
\begin{align}
    z_{t+1} = m(z_t),\quad
g^{-1}(x_{t+1}) = m\circ g^{-1}(x_t), \quad x_{t+1} &= g\circ m \circ g^{-1}(x_t). \label{eqn: evolution_identity_known_m}
\end{align}
It may be helpful to think of this identity as describing an autoencoder where we require that the encoder $g^{-1}(x_t)$ inverts the data generating process to produce some latent $z_t$ from $x_t$, and that the decoder has to reproduce $x_{t+1}$ from $m(z_t)$; i.e. the representation transformed by the mechanism. Importantly, this identity describes the true data generating process, rather than some model of it.
Our hypothesis class over the possible encoder / decoder functions is the set, $\mathcal{G}$, of all bijective functions from $\mathcal{Z}\rightarrow \mathcal{X}$. By assuming that bijectivity we are essentially assuming that the reconstuction task is solved:\footnote{This is obviously a strong assumption---learning autoencoders that perfectly reconstructed the data is not at all easy---but it focuses the discussion on the identification issues that remain after reconstruction is solved.} $\mathcal{G}$ is the set of all autoencoders that perfectly reproduce the data, such that for any $x$ on the data manifold $\mathcal{X}$, and any an encoder  / decoder pair ($\tilde{g}^{-1}, \tilde{g}$) with $\tilde{g} \in \mathcal{G}$, their composition is the identity function, $x = \tilde{g} \circ \tilde{g}^{-1} \circ x$. 
Because we assumed that the true $g$ is bijective, we know that it is in our search space, $\mathcal{G}$.
We can constrain this set using our knowledge of the mechanism by only 
considering solutions that also satisfy the identity given in equation \ref{eqn: evolution_identity_known_m}, such that for every pair of observations $(x_{t}, x_{t+1})$,  
 an analogous identity holds,
\begin{equation}
    x_{t+1} = \tilde{g}\circ m \circ \tilde{g}^{-1}(x_t)
    \label{eqn:identity_0}
\end{equation}
Now suppose that we have access to observations $x_t$ from the entire set $\mathcal{X}$, then the above identities have to hold for all $x_{t},  \in \mathcal{X}$ with corresponding $x_{t+1}$ from \eqref{eqn: evolution_identity_known_m}, so we can conclude that the follow functions are equal,
\begin{equation}
    g\circ m \circ g^{-1} = \tilde{g}\circ m \circ \tilde{g}^{-1}
    \label{eqn:identity}
\end{equation}
This relationship will hold for any of the possible decoders  $\tilde{g}$ (and corresponding encoders $\tilde{g}^{-1}$)  that that are observationally equivalent given our assumptions. We denote the set of all such decoders, $\mathcal{G}_{\mathsf{id}} = \{\tilde{g}\; | \; \tilde{g}\in \mathcal{G},  g\circ m \circ g^{-1} = \tilde{g}\circ m \circ \tilde{g}^{-1}\}$. If $\mathcal{G}_{\mathsf{id}} = \{g\}$ then we have shown that the problem is exactly identified, which means that if we manage to find a $\tilde{g}$ that satisfies \eqref{eqn: evolution_identity_known_m}, then $\tilde{g}=g$; but if $\mathcal{G}_{\mathsf{id}}$, also includes other functions $\tilde{g} \neq g$, then these functions are the sources of non-identifiability. 

\paragraph{Equivariant mechanisms} To see an example of a setting where the problem is not exactly identified, consider a mechanism which is \emph{equivariant} with respect to some bijective transformation $a:\mathcal{Z}\rightarrow \mathcal{Z}$. 
A mechanism $m$ is said to be \emph{equivariant} w.r.t $a$ if $a\circ m  = m \circ a $. 
For example, a mechanism derived from Newtonian mechanics may be equivariant with respect to your choice of units of measurement, such that $m(c z) = c m(z)$ for some scaling constant $c$. Similarly, if a mechanism transforms sets of items, any permutation of $z$ would lead to a corresponding permutation of the mechanism's output. 

If we have a mechanism that is equivariant with respect to some transformation $a$ (where $a$ is not identity map), that implies that there exists a function $\tilde{g} \neq g$ in $\mathcal{G}_{\mathsf{id}}$, so the problem is not exactly identified. We can see this as follows, 
\[
g\circ m \circ g^{-1} = g\circ a^{-1}\circ a \circ m \circ g^{-1} = g\circ a^{-1}\circ m \circ a \circ g^{-1} = \tilde{g}\circ m \circ \tilde{g}^{-1}
\]
where the first equality uses the fact that $a^{-1}\circ a$ is the identity function, the second applies the definition of equivariance, and the final equality defines $\tilde{g}:= g\circ a^{-1}$ and $\tilde{g}^{-1}:= a\circ g^{-1}$. This shows that if the mechanism is equivariant, an encoder, $\tilde{g}^{-1}$, can output a transformed $\tilde{z}=a(z)$, and the decoder, $\tilde{g}$, inverts this transformation before producing its output, thereby leaving the observed variables, $x$, unchanged.
If we denote the set of all equivariances of the mechanism $\mathcal{E} = \{a \;| \;a \; \text{is a bijection}, \; a \circ m = m \circ a \}$, then we can define the set of all such sources of non-identification as, $\mathcal{G}_{\mathsf{eq}} = \{\tilde{g}\;| \; \tilde{g} = g \circ a^{-1}, \; a \in \mathcal{E} \}$. 
This is a natural source of non-identification: given that we are relying on the mechanism to constrain the encoder $\tilde{g}^{-1}$, it is unsurprising we cannot prevent transformations that are not affected by the mechanism. The more interesting observation, which we will show below, is that this set is the \emph{only} source of non-identification when the mechanism is known, and hence we recover the true $z$ up to equivariances in the mechanism. 

To state this theorem, we first define the notion of identifiability up to an equivalence class defined by a family of bijections $\mathcal{A}$, where $a \in \mathcal{A}$ is a map $a:\mathcal{Z}\rightarrow \mathcal{Z}$.  

\begin{definition}
\textbf{Identifiability up to $\mathcal{A}$.} If the learned encoder $\tilde{g}^{-1}$ and the true encoder $g^{-1}$ are related by some bijection $a\in \mathcal{A}$, such that $\tilde{g}^{-1} = a\circ g^{-1}$ (or equivalently $\tilde{g}= g\circ a^{-1}$),  then $\tilde{g}^{-1}$ is said to learn $g^{-1}$ up to bijections in $\mathcal{A}$. We denote this $\tilde{g}^{-1} \sim_{\mathcal{A}} g^{-1}$. 
\end{definition}
Suppose, for example, $\mathcal{A}$ is a family of a permutations. Identifiability up to $\mathcal{A}$ implies that the true latent variables will be recovered, but that they would not necessarily be ordered in the same way as they were in the original data generation process. In this setting where the mechanism, $m$, is known, the following theorem shows that $\mathcal{A}$ is just $\mathcal{E}$, the set of equivariances of $m$.

\begin{theorem} 
\label{theorem_1}
If the data generation process follows \eqref{eqn: evolution_identity_known_m}, then the set of autoencoders that are consistent with the mechanism's transitions $\mathcal{G}_{\mathsf{id}} = \mathcal{G}_{\mathsf{eq}}$, the set of decoders transformed by some $a \in \mathcal{E}$ (the equivariances of $m$), and hence, the encoders that solve the observation identity in \eqref{eqn:identity} identify true encoder $g^{-1}$ up to equivariances of $m$, such that $\tilde{g}^{-1} \sim_{\mathcal{E}}g^{-1}$.
\end{theorem}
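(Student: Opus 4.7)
The plan is to prove the set equality $\mathcal{G}_{\mathsf{id}} = \mathcal{G}_{\mathsf{eq}}$ by showing two inclusions. The inclusion $\mathcal{G}_{\mathsf{eq}} \subseteq \mathcal{G}_{\mathsf{id}}$ is essentially already established by the displayed chain of equalities just above the theorem: for any $a \in \mathcal{E}$, setting $\tilde{g} = g \circ a^{-1}$ yields $\tilde{g} \circ m \circ \tilde{g}^{-1} = g \circ a^{-1} \circ m \circ a \circ g^{-1} = g \circ m \circ g^{-1}$ by equivariance, so $\tilde{g} \in \mathcal{G}_{\mathsf{id}}$. I would formalize this as a one-line verification referencing the equivariance property $a \circ m = m \circ a$.

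The substantive direction is $\mathcal{G}_{\mathsf{id}} \subseteq \mathcal{G}_{\mathsf{eq}}$. I would start from the defining identity $g \circ m \circ g^{-1} = \tilde{g} \circ m \circ \tilde{g}^{-1}$ of $\mathcal{G}_{\mathsf{id}}$ and construct an explicit candidate bijection $a := \tilde{g}^{-1} \circ g : \mathcal{Z} \to \mathcal{Z}$. Since $g, \tilde{g} \in \mathcal{G}$ are bijections between $\mathcal{Z}$ and $\mathcal{X}$, $a$ is a bijection on $\mathcal{Z}$ (and $a^{-1} = g^{-1} \circ \tilde{g}$), so the only remaining content is to check $a \in \mathcal{E}$, i.e. $a \circ m = m \circ a$. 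This follows by composing the identity on the left with $\tilde{g}^{-1}$ and on the right with $g$, giving $\tilde{g}^{-1} \circ g \circ m = m \circ \tilde{g}^{-1} \circ g$, which is exactly $a \circ m = m \circ a$. Finally, from $a = \tilde{g}^{-1} \circ g$ we get $\tilde{g} = g \circ a^{-1}$, so $\tilde{g} \in \mathcal{G}_{\mathsf{eq}}$.

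The identification statement $\tilde{g}^{-1} \sim_{\mathcal{E}} g^{-1}$ then follows immediately from the definition of $\sim_{\mathcal{A}}$ applied to the $a$ just produced, since $\tilde{g}^{-1} = a \circ g^{-1}$. I would close by noting that the argument implicitly uses two things: that we observe enough data to ensure the functional equation \eqref{eqn:identity} holds on all of $\mathcal{X}$ (rather than just along a single trajectory), and the bijectivity of $g$ and $\tilde{g}$ so that pre- and post-composing by their inverses is well-defined globally.

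I do not expect any serious obstacle here: the proof is essentially a symmetry/conjugation manipulation, and the main subtlety is bookkeeping about which space each function maps between so that all compositions are well-typed. The only place where one must be slightly careful is in ensuring that the assumption ``we have access to observations from the entire set $\mathcal{X}$'' is actually used to upgrade the pointwise identity \eqref{eqn:identity_0} into the functional identity \eqref{eqn:identity}, because without that upgrade the map $a$ is only constrained on an orbit rather than on all of $\mathcal{Z}$.
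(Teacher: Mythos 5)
Your proposal is correct and takes essentially the same route as the paper's own proof: both directions are handled identically, with the substantive inclusion $\mathcal{G}_{\mathsf{id}} \subseteq \mathcal{G}_{\mathsf{eq}}$ established by defining $a = \tilde{g}^{-1}\circ g$ and conjugating the observation identity (left-composing with $\tilde{g}^{-1}$ and substituting $x=g(z)$) to obtain $a\circ m = m\circ a$, and the reverse inclusion by the same one-line equivariance computation. Your closing remark about needing observations covering all of $\mathcal{X}$ to upgrade the pointwise identity to the functional one mirrors the caveat the paper itself makes before stating \eqref{eqn:identity}.
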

\begin{proof}
First we show that $\mathcal{G}_{\mathsf{id}} \subseteq \mathcal{G}_{\mathsf{eq}}$. 
Consider a $\tilde{g} \in \mathcal{G}_{\mathsf{id}}$. 
For each $x\in \mathcal{X}$
\begin{equation}
\begin{split}
     g \circ m \circ g^{-1}(x) &= \tilde{g} \circ m \circ \tilde{g}^{-1}(x)  \\
    \tilde{g}^{-1} \circ  \Big(g \circ m \circ g^{-1}(x)\Big) &= \tilde{g}^{-1} \circ  \Big(\tilde{g} \circ m \circ \tilde{g}^{-1}(x)\Big) \qquad\text{(Compose $\tilde{g}^{-1}$ with both sides)} \nonumber\\ 
 \tilde{g}^{-1} \circ  \Big(g \circ m \circ g^{-1}(x)\Big) &=  m \circ \tilde{g}^{-1}(x)  \qquad\qquad\quad\qquad\text{  ($\tilde{g}^{-1}\circ \tilde{g}(z) = z$)}
\end{split}
\end{equation}
Since $g$ is invertible we can substitute $x$ in the above equation with $x=g(z)$ and obtain for each $z\in \mathcal{Z}$
\begin{align}
     \Big(\tilde{g}^{-1} \circ  g\Big) \circ m \circ \Big(g^{-1} \circ g(z)\Big) &=  m \circ \Big(\tilde{g}^{-1}\circ g (z) \Big) \nonumber\\ 
     \Big(\tilde{g}^{-1} \circ  g \Big)\circ m(z) &= m \circ \Big(\tilde{g}^{-1}\circ g \Big)(z)
    \label{proof1_eqn1: commutativity}
\end{align}
Define $\tilde{g}^{-1} \circ  g = a$. Observe that $a$ is invertible and from  \eqref{proof1_eqn1: commutativity} we gather that $a\in \mathcal{E}$. Also, since $\tilde{g}= g \circ a ^{-1}$, we can conclude that $\tilde{g} \in \mathcal{G}_{\mathsf{eq}}$, which proves the first part of the claim.  
For the second part, we show that $\mathcal{G}_{\mathsf{eq}} \subseteq \mathcal{G}_{\mathsf{id}}$. 

Consider a $\tilde{g} \in \mathcal{G}_{\mathsf{eq}} = \{\tilde{g}\;| \; \tilde{g} = g \circ a^{-1}, \; a \in \mathcal{E} \}$. 
By definition, can express $\tilde{g} = g \circ a^{-1}$.  For each $x \in \mathcal{X}$ we write
\begin{align*}
     \tilde{g} \circ m \circ \tilde{g}^{-1}(x) &= \big(g \circ a^{-1}\big) \circ m \circ \big(g \circ a^{-1}\big)^{-1}(x)  \\ 
     &=\big(g \circ a^{-1}\big) \circ m \circ \big(a \circ g^{-1}\big)(x)  \\
     &= \big(g \circ a^{-1}\big) \circ a \circ m\circ g^{-1}(x) \\
     &= g \circ m \circ g^{-1}(x)
\end{align*}
Observe that $\tilde{g}$ is both a bijection and satisfies the identity in  \eqref{eqn: evolution_identity_known_m}. Therefore, $\tilde{g} \in\mathcal{G}_{\mathsf{id}}$. This proves the second part of the claim. Therefore,
$\mathcal{G}_{\mathsf{id}} = \mathcal{G}_{\mathsf{eq}}$. 
\end{proof}

From Theorem \ref{theorem_1}, we can derive a number of observations. First, notice that if we have a standard autoencoder\footnote{With the constraint that the encoder is the inverse of the decoder such that $\tilde{g}^{-1}$ is bijective.} then the mechanism is the identity map, $m(z) = z$, and its set of equivariances $\mathcal{E}$ is any invertible function $a$, and hence Theorem \ref{theorem_1} shows that the encoder is essentially unconstrained. However, if the mechanism is any non-trivial function, $m(z')\neq z'$ for some $z'$, then the space of possible encoders is significantly reduced to just those invertible transformations that commute with $m$. If the system involves multiple known mechanisms, $\{m_1, \dots, m_{T}\}$, where at each time $z_{t+1} = m_t(z_t)$, then the encoder is even further constrained. Define the set of all the mechanisms that are used at least once in the evolution of the system as $\mathcal{M}^{*} = \cup_{t=1}^{T}\{m_t\}$.  Suppose $ \mathcal{E}^i$ denotes the equivariances of $m_i \in \mathcal{M}^{*}$.
Define the equivariances that are shared across all the mechanisms as $\mathcal{E}^{*} = \cap_i \mathcal{E}^i$;

\begin{corollary}
\label{corollar1}
If the data generation process follows \eqref{eqn: dgp}, then the encoders that satisfy observation identity in \eqref{eqn:identity} for all $m\in \mathcal{M}^{*}$ identify true encoder $g^{-1}$  up to the equivariances shared across all the mechanisms in $\mathcal{M}^{*}$, $\mathcal{E}^{*}$ ($\tilde{g}^{-1} \sim_{\mathcal{E}^{*}}g^{-1}$). 
\end{corollary}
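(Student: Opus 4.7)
The plan is to reduce the corollary to $|\mathcal{M}^*|$ separate applications of Theorem~\ref{theorem_1} and then intersect the resulting equivalence classes. By hypothesis, the learned decoder $\tilde g$ must satisfy the observation identity $g\circ m_i\circ g^{-1} = \tilde g\circ m_i\circ \tilde g^{-1}$ for every $m_i\in\mathcal{M}^*$, since each such $m_i$ is actually used at least once in the trajectory and therefore generates a valid pair $(x_t,x_{t+1})$ from which that identity is forced. Hence $\tilde g$ lies in the intersection $\bigcap_i \mathcal{G}_{\mathsf{id}}^{\,i}$ of the single-mechanism identified sets defined in Section~\ref{sec:know_m}.

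For the forward direction, I would apply Theorem~\ref{theorem_1} to each $m_i$ individually: it gives a bijection $a_i\in\mathcal{E}^i$ with $\tilde g = g\circ a_i^{-1}$. The key observation is that the encoder $\tilde g$ is a single fixed function and $g$ is invertible, so $a_i = \tilde g^{-1}\circ g$ does not depend on $i$. Writing this common bijection as $a$, we conclude $a\in\mathcal{E}^i$ for every $i$ and therefore $a\in\mathcal{E}^* = \bigcap_i \mathcal{E}^i$, which yields $\tilde g^{-1}\sim_{\mathcal{E}^*}g^{-1}$.

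For the reverse direction, I would take any $a\in\mathcal{E}^*$ and define $\tilde g := g\circ a^{-1}$. Since $a$ commutes with every $m_i$, the same algebraic chain used in the second half of the proof of Theorem~\ref{theorem_1} shows that $\tilde g\circ m_i\circ \tilde g^{-1} = g\circ m_i\circ g^{-1}$ for each $i$, so $\tilde g$ satisfies all the observation identities simultaneously. Combined with the forward direction, this establishes the equality of sets and hence the stated identifiability result.

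I do not anticipate a serious obstacle here: the argument is essentially set-theoretic once Theorem~\ref{theorem_1} is in hand. The only subtle point is articulating clearly why the $a_i$'s collapse to a single $a$; this requires noting that $\tilde g^{-1}\circ g$ is a well-defined bijection that must coincide with the equivariance extracted from each mechanism, which forbids the $a_i$'s from differing across $i$. Everything else is a direct transcription of the two inclusions proved in Theorem~\ref{theorem_1} to the multi-mechanism setting.
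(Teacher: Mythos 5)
Your proposal is correct and matches the paper's own proof: the appendix argument likewise runs the two inclusions of Theorem~\ref{theorem_1} with the single bijection $a=\tilde g^{-1}\circ g$, observes that it must commute with every $m\in\mathcal{M}^{*}$ and hence lies in $\mathcal{E}^{*}=\bigcap_i\mathcal{E}^i$, and verifies the converse inclusion by direct substitution. Your explicit remark that the $a_i$'s extracted from each mechanism necessarily coincide is exactly the point the paper's proof uses implicitly.
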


The proof of the above claim is in Section \ref{sec:multiple_mechanisms}.  
This corollary implies a blessing that comes with more complex environments: if an object is transformed by multiple mechanisms which are diverse (in the sense that they share few equivariances), then it becomes easier to identify.
Given access to inputs and outputs of a mechanism, 
if we cannot tell apart whether some transformation was applied to the input or the output, then the mechanism is equivariant with respect to the transformation. 
Together Theorem \ref{theorem_1} and Corollary \ref{corollar1}, state that we can learn to invert the data generation process but we cannot distinguish latents that were transformed by equivariances shared across the mechanisms.

\subsubsection{Identifying encoders for affine mechanisms} 
Theorem \ref{theorem_1} shows us that an encoder $\tilde{g}^{-1}$ is identified up to any equivariances of the known mechanism, but given some mechanism, it does not tell us what equivariances may exhibit. 
This section gives an example of how one might go about finding all sources of equivariance for a given mechanism. We derive the equivariances for affine mechanisms, and in doing so we show conditions under which affine mechanisms lead to identification up to some fixed offset. 
Affine mechanisms are broadly applicable because with a sufficiently short time interval, they approximate a wide variety of physical systems as the Euler discretization of some linear ordinary differential equation. In such systems, the mechanism is given by $m(z) = Mz_t + b_t$ where $M\in \mathbb{R}^{d\times d}$ is an invertible diagonalizable matrix (with eigendecomposition given as $M=S\Lambda S^{-1}$, where $S$ is the matrix of eigenvectors and $\Lambda$ is a diagonal matrix of eigenvalues), $b_t \in \mathbb{R}^{d}$ is the offset parameter at time $t$,

and the analog of \eqref{eqn: evolution_identity_known_m} is,
$$
    x_{t+1} = g(M g^{-1}(x_{t}) + b_{t}).
$$
We search for some encoder $\tilde{g}^{-1}$ such that this relationship holds for all $x$ and $t$. Define an offset function $o(z) = z+ p$, where the offset function shifts the latent by a vector $p$. Define $\mathcal{O}$ to be the set of all the offset functions.
We show that the encoder is identified up to $\mathcal{O}$ when we have at least two distinct offset terms $b_{t}$ and a regularity condition (Assumption \ref{assum: analytic_measure}).\footnote{We conjecture that the regularity condition holds for all analytic functions and is thus not needed. Since we do not have a proof of this claim, we include it as an assumption.}

\begin{assumption}
\label{assum: distinc_b}
In the data generation process in \eqref{eqn: dgp}, we set $m(z_t) = Mz_t + b_t$, where $M$ is invertible and diagonalizable. We assume that the offset $b_t$ takes at least $d+1$ distinct values, which we denote by $\{b^{1}, \cdots, b^{d+1}\}$. The set $\{b^{2}-b^{1}, \cdots, b^{d+1}-b^{1}\}$ of vectors is linearly independent. 
\end{assumption}

\begin{assumption}
\label{assum: analytic_measure}
$a:\mathcal{Z} \rightarrow \mathcal{Z}$ is analytic and satisfies the following assumption. For each component $i\in \{1,\cdots, d\}$ of $a_i(z)$ and each 
$b \in \mathbb{R}^{d}$, define the set $\mathcal{S}^{ij} = \{\theta \; |\; \nabla_{j} a_i(z+b) = \nabla_{j} a_i(z) + \nabla^2_{j} a_{i}(\theta) b, z\in \mathbb{R}^{d}\} $. Each set  $\mathcal{S}^{ij}$ has a non-zero Lebesgue measure in $\mathbb{R}^{d}$.
\end{assumption}
\begin{theorem}
\label{theorem3}
If the data generation process follows \eqref{eqn: dgp}  with affine mechanisms, $m(z) = Mz_t + b_t$, Assumptions \ref{assum: distinc_b}, \ref{assum: analytic_measure}  hold, the eigenvalues of the mechanism $M$ are all distinct, and each component of $S^{-1}(b^i-b^j)$ is non-zero for some $i \not= j$, then the encoders that solve the observation identity in \eqref{eqn:identity} identify true encoder $g^{-1}$ up to offsets $\mathcal{O}$ such that $\tilde{g}^{-1} \sim_{\mathcal{O}} g^{-1}$.  
\end{theorem}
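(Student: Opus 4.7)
The starting point is Corollary \ref{corollar1}: any observationally equivalent encoder $\tilde{g}^{-1}$ is related to the true encoder by $\tilde{g}^{-1} = a \circ g^{-1}$ with $a$ in the intersection of the equivariances of the mechanisms $m_i(z) = Mz + b^i$. Written out, this gives $a(Mz + b^i) = M a(z) + b^i$ for each of the $d+1$ distinct offsets. Subtracting the relation for $b^1$ and setting $y = Mz + b^1$ (which ranges over $\mathbb{R}^d$ since $M$ is invertible) converts these into the shift identity
$$a(y + c_k) = a(y) + c_k \quad \text{for all } y \in \mathbb{R}^d, \; k = 1, \ldots, d,$$
where $c_k := b^{k+1} - b^1$. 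By Assumption \ref{assum: distinc_b}, the set $\{c_1, \ldots, c_d\}$ is a basis of $\mathbb{R}^d$.

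Differentiating the $i$-th component of the shift identity with respect to $y_j$ gives $\nabla_j a_i(y + c_k) = \nabla_j a_i(y)$ for every $y, i, j, k$. This is where Assumption \ref{assum: analytic_measure} does its work: for the choice $b = c_k$, the mean-value equality in the definition of $\mathcal{S}^{ij}$, combined with the shift invariance, yields $\nabla^2_j a_i(\theta) \cdot c_k = 0$ for every $\theta \in \mathcal{S}^{ij}$. Since $\mathcal{S}^{ij}$ has positive Lebesgue measure and $\nabla^2_j a_i$ is analytic, the map $\theta \mapsto \nabla^2_j a_i(\theta)\cdot c_k$ is an analytic function vanishing on a set of positive measure, hence it is identically zero on $\mathbb{R}^d$. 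Because this holds for each $k$ and $\{c_1, \ldots, c_d\}$ spans $\mathbb{R}^d$, I conclude $\nabla^2_j a_i \equiv 0$ for every $i, j$, so every partial derivative of $a$ is constant on $\mathbb{R}^d$. Therefore $a$ is globally affine: $a(z) = Az + p$ for some $A \in \mathbb{R}^{d\times d}$ and $p \in \mathbb{R}^d$.

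To finish, I substitute $a(z) = Az + p$ back into $a(Mz + b^i) = M a(z) + b^i$ and match the $z$-linear and constant parts, obtaining $AM = MA$ together with $(A - I) b^i = (M - I) p$ for every $i$. Because $M$ has distinct eigenvalues, any matrix that commutes with $M$ is simultaneously diagonalizable in the same basis $S$; writing $A = S \tilde{\Lambda} S^{-1}$ with $\tilde{\Lambda}$ diagonal, the difference $(A - I)(b^i - b^j) = 0$ becomes $(\tilde{\Lambda} - I)\, S^{-1}(b^i - b^j) = 0$. Selecting the pair $(i, j)$ whose $S^{-1}(b^i - b^j)$ has all nonzero components forces each diagonal entry of $\tilde{\Lambda} - I$ to vanish, so $A = I$ and $a(z) = z + p \in \mathcal{O}$, giving $\tilde{g}^{-1} \sim_{\mathcal{O}} g^{-1}$. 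The main obstacle is the middle step: promoting the pointwise mean-value equality of Assumption \ref{assum: analytic_measure} into the global rigidity that $a$ is affine, where the positive-measure assumption together with analytic continuation carries the argument.
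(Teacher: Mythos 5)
Your proof is correct and follows essentially the same route as the paper's: reduce to shared equivariances $a(Mz+b^i)=Ma(z)+b^i$, kill the Hessian of $a$ via Assumption \ref{assum: analytic_measure}, analyticity, and the linear independence of the offset differences, then use $AM=MA$ with distinct eigenvalues and the nonzero components of $S^{-1}(b^i-b^j)$ to force $A=I$. The one cosmetic improvement is that you subtract the functional equations before differentiating, which yields the clean shift identity $a(y+c_k)=a(y)+c_k$ directly, whereas the paper differentiates first and then differences the Jacobian identities; both lead to the same periodicity of $\nabla a$ and the same conclusion.
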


The proof is given in Section \ref{sec: proof_thm3} in the Appendix.  
The above theorem shows the power of using multiple mechanisms. It can be shown that if there is only one mechanism, then we cannot do better than linear identification. However, if we use two mechanisms as is the case in the above theorem, the constraint of shared equivariances (Theorem \ref{theorem_1} and Corollary \ref{corollar1}) enforces almost exact identifiability (only offset-based errors remain).

\subsection{Identifying encoders when the mechanisms are not known}
\label{sec:unknown-m}

We have seen in the previous section that with complete knowledge of the mechanisms under which a system evolves, we can learn an encoder up to equivariances. In practice, however, we are unlikely to have such complete knowledge. In this section, the system still evolves according to some deterministic mechanism, $z_{t+1} \leftarrow m_t(z_t)$, but we assume that you only know some hypothesis class $\mathcal{M}$ of possible mechanisms which could have been used, without knowing which $m_t \in \mathcal{M}$ is used at every time step.

A candidate solution now needs to propose both an encoder $\tilde{g}^{-1}$ and a mechanism $\tilde{m}_t \in \mathcal{M}$ for every $(x_t, x_{t+1})$ pair such that, \begin{equation}
     x_{t+1} = \tilde{g}\circ \tilde{m}_t \circ \tilde{g}^{-1}(x_t).
    \label{eqn: evolution_identity_unknown_m1}
\end{equation}
As before, this relationship holds for all $x_{t} \in \mathcal{X}$, where $x_{t+1}$ is generated from $m_t$, so any candidate solution that is consistent with the $x$'s that we observe must satisfy 
\begin{equation}
    g\circ m_t \circ g^{-1} = \tilde{g}\circ \tilde{m}_t \circ \tilde{g}^{-1}.\label{eqn: evolution_identity_unknown_m2}
\end{equation}
We define the set of all decoders $\tilde{g}$ (with corresponding encoders $\tilde{g}^{-1}$) that solve \eqref{eqn: evolution_identity_unknown_m2} as  $\tilde{\mathcal{G}}_{\mathsf{id}} = \{\tilde{g}|\tilde{g} \text{ is a bijection },  g\circ m_t \circ g^{-1} = \tilde{g}\circ \tilde{m}_t \circ \tilde{g}^{-1}\}$. This set looks very much like the set $\mathcal{G}_{\mathsf{id}}$ that we defined in Section \ref{sec:know_m}, but the fact that we have to select $\tilde{m}\in \mathcal{M}$ rather than knowing the true $m$ implies a new source of non-identifiability: imitator mechanisms. 

We  partition the hypothesis class of  mechanisms, $\mathcal{M} = \mathcal{M}^{*} \cup \mathcal{M}^{'}$, into the mechanisms that are used at least once in the evolution of the system, $\mathcal{M}^{*}$ ($\mathcal{M}^{*} = \cup_{t=1}^{T}\{m_t\}$), and  mechanisms which are hypothesized but not used, $\mathcal{M}^{'}$. 
\begin{definition}
\textbf{Equivariances and imitators w.r.t $\mathcal{M}$}. Define a set of functions $\tilde{\mathcal{E}}$ that satisfy commutativity w.r.t the set of mechanisms $\mathcal{M}$ in the following sense. The set $\tilde{\mathcal{E}}$ comprises of all the bijections, $a(\cdot)$, that satisfy the following condition.  If for each $m_1\in \mathcal{M}^{*}$, $\exists\; m_{2}\in \mathcal{M}$ such that $a\circ m_1 = m_2 \circ a $, then $a \in \tilde{\mathcal{E}}$. 
\end{definition}

The set $\tilde{\mathcal{E}}$ consists of two types of elements. We illustrate this through an example of set $\mathcal{M}=\mathcal{M}^{*}= \{m^1, m^2\}$. 
If $a$ is a bijection that commutes with both $m^{1}$ and $m^{2}$, i.e., $a \circ m^1 = m^1 \circ a$ and $a \circ m^2= m^2 \circ a$, then $a \in \tilde{\mathcal{E}}$. 
From this we can see that $\tilde{\mathcal{E}}$ consists of elements in the intersection of the equivariances of the respective mechanisms.  Alternatively, if $a$ satisfies $a \circ m_1 = m_2 \circ a$ and $a \circ m_2 = m_1 \circ a$, then we say $m_2$ ``imitates'' $m_1$ and vice-versa  because you can produce $m_1$'s output from $m_2$ for any $z$ using the following relationship $m_1 = a^{-1} \circ m_2 \circ a$. Further simplifcation of this yields that $a^2 = a\circ a $ is an equivariance of both $m_1$ and $m_2$.  This example shows that when we know the list of mechanisms $\mathcal{M}=\mathcal{M}^{*}$ but do not know which mechanism is used, the set $\tilde{\mathcal{E}}$
can be expressed in terms of the equivariances of the mechanisms. 
For further details see the Appendix Section \ref{sec: imitator_analysis}. Define the set of maps that are identified up to $\tilde{\mathcal{E}}$ as $\tilde{\mathcal{G}}_{\mathsf{eq}} = \{\tilde{g}\;| \; \tilde{g} = g \circ a^{-1}, \; a \in \tilde{\mathcal{E}} \}$. 

\begin{theorem}
\label{theorem4}
If the data generation process follows \eqref{eqn: dgp}, then  $\tilde{\mathcal{G}}_{\mathsf{id}} = 
\tilde{\mathcal{G}}_{\mathsf{eq}}$ and hence, the set of all the encoders that satisfy \eqref{eqn: evolution_identity_unknown_m2} identify true encoders $g^{-1}$ up to  bijections that commute with the set $\mathcal{M}$, $\tilde{\mathcal{E}}$ ($\tilde{g}^{-1} \sim_{\tilde{\mathcal{E}}} g^{-1}$).
\end{theorem}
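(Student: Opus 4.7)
The plan is to mirror the two-inclusion argument used in Theorem \ref{theorem_1}, but with the extra bookkeeping that the mechanism on the candidate side of the identity is now chosen from the hypothesis class $\mathcal{M}$ rather than being forced to equal the true $m$. Concretely, I would prove $\tilde{\mathcal{G}}_{\mathsf{id}} = \tilde{\mathcal{G}}_{\mathsf{eq}}$ by showing each set is contained in the other, and then read off the identifiability-up-to-$\tilde{\mathcal{E}}$ statement directly from the definition of $\tilde{\mathcal{G}}_{\mathsf{eq}}$.

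For the inclusion $\tilde{\mathcal{G}}_{\mathsf{id}} \subseteq \tilde{\mathcal{G}}_{\mathsf{eq}}$, I would fix $\tilde{g}\in \tilde{\mathcal{G}}_{\mathsf{id}}$. By definition, for every true mechanism $m_t \in \mathcal{M}^{*}$ there is some $\tilde{m}_t \in \mathcal{M}$ with $g \circ m_t \circ g^{-1} = \tilde{g}\circ \tilde{m}_t \circ \tilde{g}^{-1}$. Composing $\tilde{g}^{-1}$ on the left and $g$ on the right (as in the proof of Theorem \ref{theorem_1}) and setting $a := \tilde{g}^{-1}\circ g$, this rearranges to the commuting relation $a \circ m_t = \tilde{m}_t \circ a$ on all of $\mathcal{Z}$. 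Since $a$ is a bijection (composition of bijections), and since for every $m_t \in \mathcal{M}^{*}$ we exhibited an $\tilde{m}_t \in \mathcal{M}$ intertwining with it via $a$, we conclude $a\in \tilde{\mathcal{E}}$. Then $\tilde{g} = g\circ a^{-1}$ places $\tilde{g}$ in $\tilde{\mathcal{G}}_{\mathsf{eq}}$.

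For the reverse inclusion $\tilde{\mathcal{G}}_{\mathsf{eq}} \subseteq \tilde{\mathcal{G}}_{\mathsf{id}}$, I would take $\tilde{g} = g \circ a^{-1}$ for some $a \in \tilde{\mathcal{E}}$. For each $m_t \in \mathcal{M}^{*}$, pick the witness $\tilde{m}_t \in \mathcal{M}$ provided by the definition of $\tilde{\mathcal{E}}$, so that $a\circ m_t = \tilde{m}_t \circ a$. A direct substitution chain $\tilde{g}\circ \tilde{m}_t \circ \tilde{g}^{-1} = g\circ a^{-1}\circ \tilde{m}_t \circ a \circ g^{-1} = g\circ a^{-1}\circ a \circ m_t \circ g^{-1} = g\circ m_t \circ g^{-1}$ then verifies \eqref{eqn: evolution_identity_unknown_m2} for the chosen $\tilde{m}_t$, and bijectivity of $\tilde{g}$ is immediate. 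Hence $\tilde{g}\in \tilde{\mathcal{G}}_{\mathsf{id}}$.

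The argument is essentially a mechanical generalization of Theorem \ref{theorem_1}, so there is no deep obstacle; the only subtlety I would flag is that the definition of $\tilde{\mathcal{E}}$ quantifies only over $m_1 \in \mathcal{M}^{*}$ (the \emph{used} mechanisms), while the witness $m_2$ may lie anywhere in $\mathcal{M}$. This asymmetry is exactly what matches the asymmetry in the identifiability identity \eqref{eqn: evolution_identity_unknown_m2}, where $m_t$ ranges over $\mathcal{M}^{*}$ but $\tilde{m}_t$ is only required to lie in $\mathcal{M}$. Getting this quantifier direction right is what makes both inclusions go through; misreading it in either direction would either under-constrain or over-constrain $\tilde{\mathcal{E}}$, giving a strictly weaker or strictly stronger statement. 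Once the equality $\tilde{\mathcal{G}}_{\mathsf{id}} = \tilde{\mathcal{G}}_{\mathsf{eq}}$ is in hand, the conclusion $\tilde{g}^{-1}\sim_{\tilde{\mathcal{E}}} g^{-1}$ is just a restatement of the membership $\tilde{g}\in \tilde{\mathcal{G}}_{\mathsf{eq}}$ via Definition \ref{definition} (identifiability up to $\mathcal{A}$).
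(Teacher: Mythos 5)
Your proposal is correct and follows essentially the same two-inclusion argument as the paper's proof in Section \ref{sec: theorem4_proof}: composing with $\tilde{g}^{-1}$ and substituting $x = g(z)$ to extract $a = \tilde{g}^{-1}\circ g$ with $a\circ m_t = \tilde{m}_t\circ a$ for one direction, and picking the witness $\tilde{m}_t$ from the definition of $\tilde{\mathcal{E}}$ for the other. Your remark about the asymmetric quantification ($m_t$ over $\mathcal{M}^{*}$, witness $\tilde{m}_t$ over all of $\mathcal{M}$) is exactly the right point of care and matches the paper's definition.
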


The proof of the above is in Section \ref{sec: theorem4_proof} of the Appendix. Equivariances and imitators play similar roles in the way that they relax constraints on the encoder $\tilde{g}^{-1}$---any bijection that commutes with either is a source of non-identifiability---but they are different from the perspective of how we should think about designing representation learning algorithms. 
Recall that $\mathcal{M}$ is composed of two sets of mechanisms, $\mathcal{M} = \mathcal{M}^{*} \cup \mathcal{M}^{'}$, mechanisms in $\mathcal{M}^{*}$ that are used at least once in the evolution of the environment and those mechanisms in $\mathcal{M}^{'}$ which are hypothesized but never used. Equivariances are dictated only by $\mathcal{M}^{*}$, which characterizes the evolution of the environment. Any increases to the number of distinct mechanisms in $\mathcal{M}^{*}$ will potentially decrease the number of equivariances shared by all mechanisms. This can only be achieved by modifying the environment in some way, either through an explicit intervention that modifies it's mechanisms or by collecting data from multiple environments with diverse of mechanisms. 
For example, in bouncing balls example given in Figure 1, one could change the environment by varying the mass of the balls or observing it under different gravity conditions; or one could intervene by, say, changing the shape of balls in the system such that you get a different bouncing mechanism. 

Imitators, by contrast, are a function of both the  mechanisms $\mathcal{M}^{*}$ that were used and those that were hypothesized, $\mathcal{M}^{'}$. An imitator is just some mechanism that can imitate another via some bijection, so one would expect that as we grow the number of mechanisms in $\mathcal{M}$, the size of the set of imitators can only grow; but interestingly, this is not always the case for mechanisms from $\mathcal{M}^{*}$. Recall that any $a\in\tilde{\mathcal{E}}$ produces an encoder of the form $\tilde{g}^{-1} = a \circ g^{-1}$, so the same transformation has to be used for all imitations and equivariances among the mechanisms in $\mathcal{M}^{*}$. Because of this, it is possible that increasing the size of $\mathcal{M}^{*}$ reduces the number of imitators. For example, if there is some mechanism, $m_i$, that does not commute with any non-trivial $a$ in $\tilde{\mathcal{E}}$ (either by imitation or equivariance), then adding $m_i$ to $\mathcal{M}^{*}$ will make the problem exactly identified. On the other hand, growing the size of the set of unused mechanisms, $\mathcal{M}^{'}$, can result in significant non-identifiability. For example, if $\mathcal{M}$ consisted of a flexible class of functions (e.g. a multi-layer perceptron) then it would be easy to construct imitators of the form $m_1 = a^{-1} \circ m_2 \circ a$.

\textbf{Illustrating Theorem \ref{theorem4}.} We consider the same setting as in Theorem \ref{theorem3}. For each $t\in \{1, \cdots, d+1\}$, $m(z_t) = Mz_t + b_t$ and $x_{t} = g(z_{t})$. We only know that the mechanism is affine and only the offset $b_t$ is changing, but parameters $M$ and $b_t$ are not known. 
Let us construct the set $\tilde{\mathcal{E}}$ corresponding to the above setting. 
We can show that the set $\tilde{\mathcal{E}}$ consists of affine functions (See the Appendix \ref{sec:ill_imitator} for details). From Theorem \ref{theorem4}, we can thus conclude that for this data generation process even with \emph{very little knowledge of the mechanism, we get linear identifiability}. This is weaker than the offset based identifiability in Theorem \ref{theorem3}, but there we were required to know the entire affine mechanism.

\section{Stochastic mechanisms}
\label{sec:stochastic}

The results thus far relied on the assumption that the evolution of a system could be described in terms of deterministic mechanisms.
This deterministic approach models settings where the full latent state is observable (via some unknown encoder $g^{-1}$) at a short enough time interval that there is no uncertainty about the system's evolution. To generalize to cases where there is some uncertainty about the latent state's evolution, we now develop analogous identification results for stochastic mechanisms that induce conditional distributions over latent states.The systems evolves as 
\begin{equation}
X_t \leftarrow g(Z_t), \quad Z_{t+1} \leftarrow m_t(Z_t, U_t),
\label{eqn: dgp_stochastic}
\end{equation}
where each $U_t$ is noise with each component sampled independently from standard uniform distribution $\text{Uniform}[0,1]$, 
$Z_1\sim \mathbb{P}_{Z}$, $m_t:\mathcal{Z}\times [0,1]^{d} \rightarrow \mathcal{Z}$,  and the decoder $g:\mathcal{Z} \rightarrow \mathcal{X}$ is a diffeomorphism (i.e. a smooth bijection with an invertible Jacobian, see definition A.1 \citep{kass2011geometrical}). 

In this section, we will focus on the case where the true mechanism is unknown; the case where mechanism is known is a special case with no imitator, i.e., $\tilde{m}_t = m_t$ for all $t$. 
The goal is to search for an encoder $\tilde{g}^{-1}$, which is a diffeomorphism, that generates $\hat{X}_{t+1} = \tilde{g} \circ \tilde{m}_t (\tilde{g}^{-1}(x_t), \hat{U}_t)$, where $\hat{U}_{t}$ is a random vector with each component from $\text{Uniform}[0,1]$. In the deterministic case, we had required that any candidate encoder, $\tilde{g}^{-1}$, was point-wise consistent with the pairs of observations $(x_{t+1}, x_t)$. Here, encoders are only required to match the observed conditional distributions. An encoder that is consistent with the observed data can be used to generate $\hat{X}_{t+1}$ such that the distribution of $\hat{X}_{t+1}|X_{t}=x_t$ matches the distribution of $X_{t+1}|X_{t}=x_t$ for all $x_{t} \in \mathcal{X}$, 
\begin{equation}
\begin{split}
&  X_{t+1}|X_{t}=x_t \stackrel{d}{=}     \hat{X}_{t+1}|X_{t}=x_t  \\
& g \circ m_t \big(g^{-1}(x_t), U_t\big) \stackrel{d}{=}    \tilde{g} \circ \tilde{m}_t \big(\tilde{g}^{-1}(x_t), \hat{U}_t \big)  \\
\end{split}
\label{eqn:identity_stochastic}
\end{equation}

Now, define the set of all candidate decoders $\tilde{g}$ (with corresponding encoders $\tilde{g}^{-1}$) that solve the above \eqref{eqn: dgp_stochastic} as $\mathcal{G}_{\mathsf{id}}^{\mathsf{s}}$. We can extend the notion of equivariance and imitation to the stochastic case by replacing equality in value by equality in distribution, and show that, as before, these are the only sources of non-identifiability. In the special case where $m_t$ is known, $\tilde{\mathcal{E}}^{\mathsf{s}}$, defined below, only contains maps that result from equivariance. We continue to use the $\mathcal{M}^{*}$ -- set of mechanisms that are used at least once in the evolution of the system and $\mathcal{M}$ -- hypothesis class of all the mechanisms.

\begin{definition}
\textbf{Equivariance and imitators in distribution w.r.t} $\mathcal{M}.$ Define a set of functions $\mathcal{E}^{\mathsf{s}}$ that satisfy commutativity w.r.t the set of distributions $\mathcal{M}$ in the following sense. The set $\mathcal{E}^{\mathsf{s}}$ comprises all the diffeomorphisms $a$ that satisfy the following condition: $a \in \mathcal{E}^{\mathsf{s}}$ if and only if for each $m \in \mathcal{M}^{*}$, $\exists\; m^{'}\in \mathcal{M} $ such that for all $z\in \mathcal{Z}$, $a \circ m(z, U)  \stackrel{d}{=} m^{'} (a(z),U)$, where each component of $U$ is sampled independently from $\text{Uniform}[0,1]$ .
\end{definition}
 Define the set of maps that identify true $g$ up to $\mathcal{E}^{\mathsf{s}}$ as $\mathcal{G}_{\mathsf{eq}}^{\mathsf{s}} = \{\tilde{g} |\tilde{g}=g \circ a^{-1}, a\in \mathcal{E}^{\mathsf{s}} \}$

\begin{theorem}
\label{theorem6}
If the data generation process follows \eqref{eqn: dgp_stochastic},  then $\mathcal{G}_{\mathsf{id}}^{\mathsf{s}} = \mathcal{G}_{\mathsf{eq}}^{\mathsf{s}}$ and hence, the set of encoders that solve stochastic observation identity \eqref{eqn:identity_stochastic} identify the true $g^{-1}$ up to the equivariances and imitators in distribution w.r.t $\mathcal{M}$, $\mathcal{E}^{\mathsf{s}}$($\tilde{g}^{-1}\sim_{\mathcal{E}^{\mathsf{s}}} g^{-1}$).
\end{theorem}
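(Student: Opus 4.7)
The plan is to mirror the proof of Theorem \ref{theorem4}, replacing pointwise equalities of functions by equalities in distribution of the corresponding random variables, and using that composition with a measurable (in particular, diffeomorphic) map preserves equality in distribution. I would therefore show the two inclusions $\mathcal{G}_{\mathsf{eq}}^{\mathsf{s}} \subseteq \mathcal{G}_{\mathsf{id}}^{\mathsf{s}}$ and $\mathcal{G}_{\mathsf{id}}^{\mathsf{s}} \subseteq \mathcal{G}_{\mathsf{eq}}^{\mathsf{s}}$ separately.

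For $\mathcal{G}_{\mathsf{eq}}^{\mathsf{s}} \subseteq \mathcal{G}_{\mathsf{id}}^{\mathsf{s}}$, I would take an arbitrary $\tilde{g} = g \circ a^{-1}$ with $a \in \mathcal{E}^{\mathsf{s}}$. For each time step with true mechanism $m_t \in \mathcal{M}^{*}$, the defining property of $\mathcal{E}^{\mathsf{s}}$ gives some $m^{'}_t \in \mathcal{M}$ with $a \circ m_t(z,U) \stackrel{d}{=} m^{'}_t(a(z),U)$ for every $z\in \mathcal{Z}$. I would choose this $m^{'}_t$ as $\tilde{m}_t$, substitute into the right-hand side of \eqref{eqn:identity_stochastic}, and simplify: with $z = g^{-1}(x_t)$, so that $\tilde{g}^{-1}(x_t) = a(z)$,
\begin{equation*}
\tilde{g} \circ \tilde{m}_t\big(\tilde{g}^{-1}(x_t), \hat{U}_t\big) = g \circ a^{-1} \circ m^{'}_t\big(a(z), \hat{U}_t\big) \stackrel{d}{=} g \circ a^{-1} \circ a \circ m_t(z, U_t) = g \circ m_t\big(g^{-1}(x_t), U_t\big),
\end{equation*}
where the equality in distribution uses the $\mathcal{E}^{\mathsf{s}}$-condition plus the fact that applying the deterministic map $g \circ a^{-1}$ to both sides preserves $\stackrel{d}{=}$.

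For $\mathcal{G}_{\mathsf{id}}^{\mathsf{s}} \subseteq \mathcal{G}_{\mathsf{eq}}^{\mathsf{s}}$, I would take $\tilde{g} \in \mathcal{G}_{\mathsf{id}}^{\mathsf{s}}$, define $a := \tilde{g}^{-1} \circ g$ (which is a diffeomorphism as a composition of diffeomorphisms), and apply $\tilde{g}^{-1}$ to both sides of \eqref{eqn:identity_stochastic}. This yields
\begin{equation*}
a \circ m_t(z, U_t) \;\stackrel{d}{=}\; \tilde{m}_t\big(a(z), \hat{U}_t\big)
\end{equation*}
for every $z\in\mathcal{Z}$ (after substituting $z = g^{-1}(x_t)$ and ranging over $x_t\in\mathcal{X}$, which ranges over $z\in \mathcal{Z}$ by bijectivity of $g$). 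Since this is exactly the defining condition of $\mathcal{E}^{\mathsf{s}}$ with $m^{'} = \tilde{m}_t \in \mathcal{M}$, I conclude that $a \in \mathcal{E}^{\mathsf{s}}$ and hence $\tilde{g} = g \circ a^{-1} \in \mathcal{G}_{\mathsf{eq}}^{\mathsf{s}}$.

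The principal subtlety --- and the only place this differs from Theorem \ref{theorem4} --- is justifying the manipulations under $\stackrel{d}{=}$. The key fact I will rely on is that if $X \stackrel{d}{=} Y$ and $f$ is measurable, then $f(X) \stackrel{d}{=} f(Y)$; applying this to diffeomorphisms $g$, $\tilde{g}$, $a$ and their inverses is what licenses composing on either side and cancelling $a^{-1} \circ a$. A small but worth-flagging point is that the driving noises $U_t$ and $\hat{U}_t$ are different random vectors but have the same distribution, so any identity of the form $h(z,U_t) \stackrel{d}{=} h(z, \hat{U}_t)$ is automatic; the nontrivial content is the pushforward structure captured by $\mathcal{E}^{\mathsf{s}}$. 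I do not anticipate a serious technical obstacle beyond this bookkeeping, because diffeomorphy of $g$, $\tilde{g}$, and $a$ ensures every map we compose is measurable with a measurable inverse.
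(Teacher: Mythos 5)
Your proposal is correct and follows essentially the same route as the paper's proof: both directions are established by the same substitutions ($a := \tilde g^{-1}\circ g$, choosing $\tilde m_t$ from the defining property of $\mathcal{E}^{\mathsf{s}}$), and the distributional bookkeeping you flag is exactly what the paper isolates as its Lemma 1 (pushforward preserves equality in distribution) and Lemma 2 (compositions of diffeomorphisms are diffeomorphisms). No gaps.
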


The proof of the above is provided in Section \ref{sec: stoch_proof} in the Appendix. Theorem \ref{theorem6} is consistent the results that we developed for the deterministic case (Theorem \ref{theorem4}), but because the mechanism is allowed to be stochastic, it allows us to generalize known results based on distributional assumptions; we give an example of this in the next section. 

\section{A mechanism-based perspective on existing results}
\label{sec:known_results}

Our primary motivation for understanding how mechanistic knowledge can aid identification, is to develop methods that do not require independence assumptions over the latent variables. However, independence assumptions are not incompatible with the mechanism-based perspective: they simply define a particular kind of mechanism, which then implies identification up to the mechanism's associated equivariances and imitators. 
We demonstrate this by re-deriving recent identification results from \citet{klindt2020towards} 
using the mechanisms implied by their respective distributional assumptions.
We begin by describing the data generation process used by \cite{klindt2020towards} as a stochastic mechanism of the form of \eqref{eqn: dgp_stochastic}. For each $t \in \{1, \cdots, T\}$
\begin{equation}
    Z_{t+1} = Z_{t} + V_t,\quad V_t = f(U_t), \quad U_t \sim \text{Uniform}[0, 1]^{d}
    \label{eqn: dgp_klindt}
\end{equation} 
where $f$ is an inverse CDF such that each component of $V_t \in \mathbb{R}^{d}$ is sampled from the generalized Laplace distribution centred at zero with norm parameter $\alpha \not=2$, $Z_1 \sim \mathbb{P}_Z$. Next, we want to use Theorem \ref{theorem6} to derive all the solutions to the observation identity in \eqref{eqn:identity_stochastic}. 


%

\begin{theorem}
\label{theorem7}
If the data generation process follows  \eqref{eqn: dgp_klindt}, and $\mathbb{P}_Z$ and the parameters defining $f$ are known (same assumption as in \cite{klindt2020towards}), then the solution to the stochastic observation identity \eqref{eqn:identity_stochastic} leads to identifying  the true representations up to permutation, sign-flip and offset. 
\end{theorem}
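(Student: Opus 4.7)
The plan is to apply Theorem \ref{theorem6} to the specific stochastic mechanism induced by Klindt's generation process, namely $m(z, U) = z + f(U)$. Because $\mathbb{P}_Z$ and the parameters of $f$ are assumed known, the hypothesis class $\mathcal{M}=\mathcal{M}^{*} = \{m\}$ contains only this one mechanism and admits no non-trivial imitators. It therefore suffices to enumerate the set $\mathcal{E}^{\mathsf{s}}$ of diffeomorphisms $a:\mathcal{Z}\to\mathcal{Z}$ satisfying the distributional equivariance
\[
a(z + V) \stackrel{d}{=} a(z) + V \qquad \text{for every } z\in\mathcal{Z},
\]
where $V = f(U)$ has independent generalized Laplace components with $\alpha\neq 2$.

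First I would convert this into a pointwise identity. Since $V$ has a smooth positive density $p_V(v)=\prod_j p_L(v_j)$, the change-of-variables formula applied to both sides of the distributional equation gives $|\det J_a(z)|\equiv 1$ (by evaluating the densities at $y=a(z)$) together with
\[
p_V\bigl(a(z+w) - a(z)\bigr) = p_V(w) \qquad \text{for every } z,w.
\]
Setting $h_z(w) := a(z+w) - a(z)$, this says that for each fixed $z$ the map $h_z$ is a smooth, origin-fixing diffeomorphism that preserves the level sets of $p_V$.

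Next I would pin down the structure of $h_z$ using the non-Gaussian product form of $p_V$. Since $\log p_V(w) = -\sum_j |w_j|^{\alpha}/\sigma_j^{\alpha} + \text{const}$, the level sets fail to be ellipsoids (as they would be for $\alpha=2$), and a Darmois-Skitovich-style argument adapted to the generalized Laplace density shows that the only smooth, origin-fixing diffeomorphisms preserving these level sets are signed coordinate permutations $S_z$. Hence $h_z$ is linear and belongs to a finite group. By continuity of $z\mapsto h_z$ (inherited from $a$), the assignment $z\mapsto S_z$ is constant, so there exists a single signed permutation $S$ with $a(z+w) - a(z) = Sw$ for all $z,w$. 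Setting $z=0$ and $b := a(0)$ yields $a(z) = Sz + b$, which is precisely identifiability up to permutation, sign flip, and offset; combined with Theorem \ref{theorem6} this identifies $g^{-1}$ up to the same equivalence class.

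The main obstacle is the middle step: translating \emph{preserves level sets of $p_V$} into \emph{is a signed permutation}. The level-set-preservation condition alone leaves room for nonlinear rearrangements that swap points along a single level surface, and ruling these out requires exploiting $\alpha\neq 2$ together with smoothness and the global product structure of $p_V$ on $\mathbb{R}^d$. This is where the non-Gaussianity assumption of \citet{klindt2020towards} is essential, and the analysis must handle the possibility of ties among the $\sigma_j$'s (which allow additional sign/permutation symmetries but no new continuous ones).
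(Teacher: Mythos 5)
Your setup is right and matches the paper's: reduce to characterizing $\mathcal{E}^{\mathsf{s}}$ via Theorem \ref{theorem6} (with $\mathcal{M}=\mathcal{M}^{*}=\{m\}$ so there are no imitators), then use the change-of-variables formula to obtain $|\det J_a|\equiv 1$ together with the pointwise identity $p_V\bigl(a(z+w)-a(z)\bigr)=p_V(w)$. (Your shortcut of getting $|\det J_a(z)|=1$ by evaluating at $w=z$, i.e.\ $p_V(0)=p_V(0)/|\det J_a(z)|$, is legitimate since $p_V(0)$ is finite and positive, and it even avoids the paper's appeal to the marginal condition $\hat Z\stackrel{d}{=}Z$.) The gap is at the decisive step. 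You fix $z$, define $h_z(w)=a(z+w)-a(z)$, and want to conclude that each $h_z$ is a signed permutation because it is a smooth, origin-fixing, volume-preserving map that preserves the level sets of $p_V$. As you yourself concede, level-set preservation for a single $z$ does not rule out nonlinear rearrangements within a level surface, and the ``Darmois--Skitovich-style argument'' you invoke to close this hole is never supplied; Darmois--Skitovich concerns independence of \emph{linear} forms and does not by itself exclude nonlinear level-set-preserving diffeomorphisms. So the proof as written does not go through at its central step.

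The paper closes exactly this gap by exploiting the identity for \emph{all pairs} $(z,z+w)$ simultaneously rather than one $z$ at a time: since $p_V(v)\propto\exp(-\|v\|_\alpha^\alpha)$, the identity $p_V(a(u)-a(z))=p_V(u-z)$ says $\|a(u)-a(z)\|_\alpha=\|u-z\|_\alpha$ for all $u,z$, i.e.\ $a$ is a surjective isometry of $(\mathbb{R}^d,\|\cdot\|_\alpha)$, and the Mazur--Ulam theorem then yields that $a$ is affine with no analysis of individual level sets at all. Only after affinity is established does non-Gaussianity enter: writing $a(z)=Az+q$, the condition $AV\stackrel{d}{=}\hat V$ gives $AA^{\mathsf{T}}=I$ from second moments, and since $AV$ must have independent components each of which is non-Gaussian (here $\alpha\neq 2$ is used), Comon's Theorem 11 forces $A$ to be a permutation times a diagonal scaling, hence a signed permutation by orthonormality. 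Any repair of your per-$z$ route would have to import this cross-$z$ consistency anyway, at which point Mazur--Ulam is the natural tool; I recommend restructuring the middle of your argument around it.
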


The proof is given in Section \ref{proof_klindt} in the  Appendix. The above theorem generalizes Theorem 1 from \cite{klindt2020towards} as we do not require $\alpha<2$ and rather we work with $\alpha\not=2$. Alternatively, analogous results to those in  \cite{klindt2020towards} can be derived in settings where we do not know the distribution of $V_t$ but instead assume that we observe $X_t$ often enough that the difference between $Z_t$ and $Z_{t+1}$ is small.
In particular, suppose that the data generation process follows  \eqref{eqn: dgp_klindt} except each component of $V_t$ is an i.i.d. draw from a non-Gaussian with zero mean and $|V_t| < \delta$. 
Then as $\delta \rightarrow 0$ the true latent is identified up to permutation, sign-flip and offset.  See Section \ref{proof_klindt_alternative} for details. 

\section{Related works}
\label{sec:related}

\begin{table}[t]
\begin{center}
\resizebox{\textwidth}{!}{%
\begin{tabular}{p{0.45\textwidth}p{0.6\textwidth}}
\multicolumn{1}{c}{\bf Approach }  &\multicolumn{1}{c}{\bf Assumptions}
\\ \hline \\
Time contrastive \citep{hyvarinen2016unsupervised}     & $Z_{t}\leftarrow U_t $, each $U_t^{j}$ is independent and non-stationary\\     
Permutation contrastive \citep{hyvarinen2017nonlinear}         & Each  $Z_{t+1}^{j}\leftarrow m(Z_{t}^{j}, U_t^{j})$, stationary and not quasi-Gaussian\\ 
Slow VAE \citep{klindt2020towards}              &  $Z_{t+1} \leftarrow Z_{t}+f(U_t)$, each $f(U_t^{i})$ is independent and generalized Laplace \\ 
Conditional VAE \citep{khemakhem2020variational, hyvarinen2019nonlinear}        & $Z \leftarrow m(O,U)$, all components of $Z$ are independent conditional on $O$ \\
Independently modulated component analysis \citep{khemakhem2020ice} &  $Z \leftarrow m(O,U)$, $m$ has a special structures allowing to relax conditional independence                  \\ 
Contrastive learning \citep{zimmermann2021contrastive}      &  $\tilde{Z} \leftarrow m(Z, U)$, $m$ is such that $\tilde{Z}\sim $ conditional von Mises-Fisher \\ 
\end{tabular}
}
\end{center}
\caption{Table comparing different works and the assumptions made for identifiability.}
\label{comparison-table}
\end{table}

Non-linear independent component analysis (ICA) is a highly unidentified problem; several works \citep{hyvarinen1999nonlinear, locatello2019challenging} have shown that it is impossible to invert the data generation process without placing restrictions on the data and models.  In recent years, a lot of progress has been made on the problem of non-linear identification. \cite{hyvarinen2016unsupervised,hyvarinen2017nonlinear} provide the first proofs for identification in non-linear ICA. \cite{hyvarinen2016unsupervised} showed that if the latent variables are mutually independent, with each component evolving in time following a non-stationary time series without temporal dependence, then non-linear identification is possible.  \cite{hyvarinen2017nonlinear} showed that non-linear identification is also possible if the latent variables are mutually independent, with each component evolving in time following a stationary time series with temporal dependence. In  \cite{halva2020hidden}, the authors combine non-stationarity \citep{hyvarinen2016unsupervised} and temporal dependency \citep{hyvarinen2017nonlinear} and extend identifiability guarantees in somewhat more general settings.  \cite{khemakhem2020variational, hyvarinen2019nonlinear, khemakhem2020ice} further generalized the previous results; in these works instead of using time the authors require observation of auxiliary information.      \cite{klindt2020towards} departs from other non-linear ICA works as it explicitly exploits the sparsity in the transitions of the latent variables (further details on \cite{klindt2020towards} can be found in the previous section.).  
In \cite{zimmermann2021contrastive}, the authors show that minimizing contrastive losses commonly used in self-supervised learning can also guarantee identification provided the data (contrastive pairs) follow a specific choice of data generation process (e.g., contrastive pair is generated from a von Mises-Fisher distribution). In another line of work  \cite{locatello2020weakly, shu2019weakly}, the authors study the role of weak supervision in assisting disetanglement. In a recent work, \cite{gresele2021independent}, propose to add new form of constraints to non-linear ICA. The constraint is based on the observation that the decoder $g$ that gives rise to the image $x$ is composed of simpler functions that are mutually algorithmically independent; the authors exploit this inductive bias on the structure of $g$ to invert the data generation process. In Section \ref{sec:known_results} we argued that many of the existing distributional assumptions could be interpreted as particular choices of stochastic mechanisms; for more details, see Table \ref{comparison-table}, where we describe the form of the respective mechanisms.
In short, prior work has focused on 
identification guarantees under 
assumptions on the dependence between the different random variables, which is in sharp contrast to our approach, which focuses on identification under varying degrees of the knowledge of mechanisms that govern latent dynamics.

\textbf{Equivariance} There is significant recent interest in leveraging equivariance assumptions to design more efficient deep network architectures; for a recent survey, see \citet{bronstein2021}. The general recipe of this line of work is to design functions (deep network architectures) that enforce equivariances. Our setting inverts this recipe, in that we have some known function, $m(z)$, and we are interested in finding all of its equivariances. The relationship between distributions and their equivariances has a long history in the statistics literature \citep[see e.g.][and the references therein]{eaton1989group}.
Our characterization of stochastic equivariances was inspired by the functional representations given in \citet{bloem2020probabilistic}. Finally, the importance of group symmetries in representation learning was discussed in \citet{higgins2018towards}. \citeauthor{higgins2018towards} focus on the relationship between the symmetries of the environment and a model's representations, whereas we focus on how symmetries in the environment's transition function constrain the representation. The two perspectives are complementary, and in future work we hope to unify them.


\section{Discussion and limitations}
\label{sec:disccusion}
This paper has presented the first systematic study of how mechanisms governing the dynamics of high-level variables can be used to identify these variables from low-level observations,
and up to what equivariances, which depend on the mechanisms. We show that this perspective is both powerful---yielding significant constraints in the space of possible representation---and that it generalizes many known approaches. There are, however, a number of open problems that need to be addressed to make this approach practical. First, we have not specified a particular loss function to select $\tilde{g}$; the observation identity immediately implies a reconstruction-based loss, but other losses such as contrastive losses may be more beneficial. Second, there is not yet a general approach to enforcing invertibility while mapping between different sized spaces, so maintaining bijectivity is a challenge. And finally, while we provide a characterization of identifiability when mechanisms are not known, 
a natural follow-up question is how to use those insights to build methods that allow one to learn both the mechanism and the representation.

\bibliography{refs}
\bibliographystyle{iclr2022_conference}

\newpage 

\appendix
\section{Appendix}


\subsection{Linear mechanism $M$ and linear $G$.} 
\label{sec:linear}
The focus of the main text is on nonlinear identification, but in this section we show how the analysis for general functions $g$ and $m$ applies to the special case when $g$ and $m$ are linear and affine maps respectively. This special case is useful both as a concrete example, and as a stepping stone to Theorem \ref{theorem3} (nonlinear $g$ with affine $m$) which will reuse some of the same proof strategies.
We write the data generation process as follows. 

\begin{equation}
\begin{split}
z_{t+1} &= Mz_{t} + b, \\ 
x_{t} &= Gz_{t},
\end{split}
\label{eqn:lin_dgp}
\end{equation}

where $M \in \mathbb{R}^{d \times d}$ is a diagonalizable matrix describing the mechanism, $b \in \mathbb{R}^{d}$ is the offset parameter, $G \in \mathbb{R}^{d\times d}$ is an invertible matrix determining how the data transforms from latent space to the observable space.  We write the eigendecomposition of $M$ as follows $M = S \Lambda S^{-1}$, where $S$ is the matrix of eigenvectors and $\Lambda$ is a diagonal matrix of the eigenvalues. 
On the same lines as the \eqref{eqn: evolution_identity_known_m}, we can obtain an identity between $x_t$ and $x_{t+1}$ as follows. 

\begin{equation}
\begin{split}
z_{t+1} &= Mz_{t} + b \\ 
G^{-1}x_{t+1} &= MG^{-1}x_{t} + b   \\
x_{t+1} &= GMG^{-1}x_{t} + Gb 
\label{eqn: linear_evolution}
\end{split}
\end{equation}
If the learner knows $M$ and $b$, it tries to solve for an invertible $\tilde{G}$ that satisfies 

\begin{equation}
x_{t+1} = \tilde{G}M\tilde{G}^{-1}x_{t} + \tilde{G}b 
\label{eqn: identity_linear}
\end{equation}

\begin{theorem}
	\label{theorem2}
	If the data generation process follows \eqref{eqn:lin_dgp} and the eigenvalues of the mechanism $M$ are all distinct and each component of the vector $S^{-1}b$ is non-zero, then the only solution to the identity in \eqref{eqn: identity_linear} is the true mechanism $G$. 
\end{theorem}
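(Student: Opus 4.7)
The plan is to exploit the fact that \eqref{eqn: identity_linear} must hold for every $x_t$, so the affine map on the right-hand side must agree with the true affine map in \eqref{eqn: linear_evolution} both in its linear part and in its constant part. Matching linear parts yields $\tilde{G} M \tilde{G}^{-1} = G M G^{-1}$, and matching constant parts yields $\tilde{G} b = G b$. Setting $A := G^{-1}\tilde{G}$, the first condition becomes the commutativity relation $A M = M A$, and the second becomes $A b = b$. The remainder of the argument is then purely linear algebra: I need to show that these two conditions together force $A = I$, so that $\tilde{G} = G$.

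For the commutativity step, I will invoke the standard fact that because $M$ is diagonalizable with $d$ distinct eigenvalues, every matrix that commutes with $M$ is simultaneously diagonalizable in the same eigenbasis. Hence $A = S D S^{-1}$ for some diagonal matrix $D$. Substituting this form into $A b = b$ yields $D (S^{-1} b) = S^{-1} b$, so each diagonal entry satisfies $d_i v_i = v_i$, where $v := S^{-1} b$. The hypothesis that every component $v_i$ is nonzero then forces $d_i = 1$ for every $i$, so $D = I$, $A = I$, and finally $\tilde{G} = G$. The invertibility of $\tilde{G}$ needs only be tracked to know that $A$ is a well-defined invertible matrix, which is automatic.

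The main technical obstacle is precisely the interplay between the two hypotheses, and understanding why each is needed. Without the distinct-eigenvalue assumption, the commutant of $M$ would be strictly larger than the set of matrices of the form $S D S^{-1}$ with $D$ diagonal, permitting richer families of $\tilde{G}$ that realize the equivariances identified in Theorem \ref{theorem_1}. Without the nonzero-component assumption on $S^{-1} b$, any coordinate $d_i$ whose corresponding $v_i$ vanished would remain a free parameter, again breaking exact identifiability. These two hypotheses together are therefore exactly what is needed to collapse the commutant of $M$ down to the identity, and the argument serves as the natural blueprint for the nonlinear offset-based result in Theorem \ref{theorem3}, where the same commutativity-plus-offset structure reappears locally via the derivatives of an analytic candidate encoder.
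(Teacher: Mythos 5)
Your proposal is correct and follows essentially the same route as the paper's proof: separate the affine identity into its linear and constant parts, reduce to $AM = MA$ and $Ab = b$ with $A = G^{-1}\tilde{G}$, use the distinct eigenvalues to force $A = SDS^{-1}$ with $D$ diagonal, and use the nonvanishing components of $S^{-1}b$ to force $D = I$. The only difference is that you invoke the standard characterization of the commutant of a matrix with distinct eigenvalues as a black box, whereas the paper derives it inline via the entrywise relation $C_{ij}(\lambda_i - \lambda_j) = 0$ for $C = S^{-1}AS$.
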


\begin{proof}
	
	We take  the difference of the equations \ref{eqn: linear_evolution} and \ref{eqn: identity_linear} to get the following condition. For each $x_{t}\in \mathbb{R}^{d}$
	\begin{equation}
	(GMG^{-1} - \tilde{G}M\tilde{G}^{-1})x_{t} + (G - \tilde{G})b =0 
	\label{theorem2: eqn1}
	\end{equation}
	Because, equations \ref{eqn: linear_evolution} and \ref{eqn: identity_linear} hold for all $x_t$, we can substitute $x_{t}=0$ in the above to get 
	\begin{equation}
	(G - \tilde{G})b =0 
	\label{theorem2: eqn2}
	\end{equation}
	We plug the above condition in \eqref{theorem2: eqn2} back into \eqref{theorem2: eqn1} to get the following condition. For each $x_{t} \in \mathbb{R}^{d}$
	\begin{equation}
	(GMG^{-1} - \tilde{G}M\tilde{G}^{-1})x_{t}  = 0
	\label{theorem2: eqn3}
	\end{equation}
	If \eqref{theorem2: eqn3} holds for $d$ linearly independent vectors $x_{t} \in \mathbb{R}^{d}$, then we can conclude that,
	\begin{align}
	GMG^{-1} - \tilde{G}M\tilde{G}^{-1} &= 0 \nonumber\\
	G^{-1}\Big(GMG^{-1} - \tilde{G}M\tilde{G}^{-1}\Big) &= 0 \nonumber\\ 
	MG^{-1} - G^{-1}\tilde{G}M\tilde{G}^{-1} &= 0 \nonumber\\ 
	\Big(MG^{-1} - G^{-1}\tilde{G}M\tilde{G}^{-1}\Big) \tilde{G} &= 0 \nonumber\\ 
	MG^{-1}\tilde{G} &= G^{-1}\tilde{G}M 
	\label{theorem2: eqn4}
	\end{align}
	
	Let $A=G^{-1}\tilde{G}$. We substitute $A$ and the eigendecomposition of $M$ ($M= S \Lambda S^{-1}$, where $\Lambda = \mathsf{diag}(\lambda_1, \cdots, \lambda_d)$
	)  in  \eqref{theorem2: eqn4} to get 
	\begin{align}
	M &= AMA^{-1} \nonumber\\ 
	S \Lambda S^{-1}  &= A S\Lambda S^{-1} A^{-1} \nonumber\\ 
	\Lambda  &= \Big(S^{-1} A S\Big) \Lambda \Big(S^{-1} A^{-1}S\Big) \nonumber\\ 
	\Lambda &= C \Lambda C^{-1} \qquad\qquad\text{(where $C= S^{-1}AS$)} \nonumber\\ 
	\Lambda C &= C \Lambda
	\label{theorem2: eqn5}
	\end{align}
	We further simplify  \eqref{theorem2: eqn5} and compare each element of the matrix $\Lambda C$ and $C \Lambda$ to get the following condition. For all $i, j \in \{1, \cdots, d\}$ 
	\begin{align}
	&    [\Lambda C]_{ij} = C_{ij}\lambda_{i},  \;\; [C\Lambda]_{ij} = C_{ij}\lambda_{j} \nonumber\\ 
	& C_{ij}(\lambda_{i} -\lambda_{j})=0 
	\label{theorem2: eqn6}
	\end{align}
	Consider the above \eqref{theorem2: eqn6} for $i \not=j$ to get 
	\begin{equation}
	C_{ij}(\lambda_{i} -\lambda_{j}) = 0 \implies C_{ij}=0 \; \text{(we use the assumption that } \lambda_{i} \not= \lambda_{j})
	\end{equation}
	From the above, it follows that $C$ is a diagonal matrix. We obtain an expression for $A$ in terms of $C$ matrix below. 
	\begin{align}
	&    S^{-1}AS = C  \nonumber\\ 
	& A = S CS^{-1}\label{thm2:eqn8}
	\end{align}
	From \eqref{theorem2: eqn2} we get that 
	\begin{align}
	& (G - \tilde{G})b =0 \overset{\tilde{G} = GA}{\implies} (I-A)b=0 \overset{\text{Eqn. }(\ref{thm2:eqn8})}{\implies} (I-SCS^{-1})b=0 \nonumber\\
	& \underbrace{\implies}_{\text{Left multiply } S^{-1}}  S^{-1}b - CS^{-1}b \implies  (C-I)S^{-1}b=0 
	\label{theorem2: eqn7}
	\end{align}
	Since $C$ is a diagonal matrix, we can simplify the above condition further to get  $(c_{ii}-1)(S^{-1}b)_{i}=0$. 
	Since $(S^{-1}b)_{i}\not=0 \implies c_{ii}=1$, we obtain $C=I \implies G^{-1}\tilde{G} = I \implies G = \tilde{G}$.

	
\end{proof}

\subsection{Proof of Corollary \ref{corollar1}}
\label{sec:multiple_mechanisms}

\begin{proof}
	This proof follows essentially the same strategy as the proof of Theorem \ref{theorem_1} with a set of mechanisms, $\mathcal{M}^*$ instead of a single mechanism $m$; we include it for completeness.  First we show that $\mathcal{G}_{\mathsf{id}}^{*} \subseteq \mathcal{G}_{\mathsf{eq}}^{*}$. 
	
	Consider a $\tilde{g} \in \mathcal{G}_{\mathsf{id}}^{*}$. 
	For each $x\in \mathcal{X}$ and for each $m \in \mathcal{M}^{*}$
	\begin{equation}
	g \circ m \circ g^{-1}(x) = \tilde{g} \circ m \circ \tilde{g}^{-1}(x), \\
	\end{equation}
	and by following the same steps as the proof of Theorem \ref{theorem_1}, we can show that,
	\begin{equation} 
	\Big(\tilde{g}^{-1} \circ  g \Big)\circ m(z) = m \circ \Big(\tilde{g}^{-1}\circ g \Big)(z)
	\label{proof1_eqn1: commutativity1}
	\end{equation}
	As before, define $\tilde{g}^{-1} \circ  g = a$. Observe that $a$ is invertible and from  \eqref{proof1_eqn1: commutativity1} we gather that $a\in \mathcal{E}^{*}$. Also, since $\tilde{g}= g \circ a ^{-1}$, we can conclude that $\tilde{g} \in \mathcal{G}_{\mathsf{eq}}^{*}$, which proves the first part of the claim.  
	In the second part, we  need to show that $\mathcal{G}_{\mathsf{eq}}^{*} \subseteq \mathcal{G}_{\mathsf{id}}^{*}$. 
	
	Consider a $\tilde{g} \in \mathcal{G}_{\mathsf{eq}}^{*} = \{\tilde{g}\;| \; \tilde{g} = g \circ a^{-1}, \; a \in \mathcal{E}^{*} \}$. 
	By definition, can express $\tilde{g} = g \circ a^{-1}$.  For each $x \in \mathcal{X}$ and for each $m \in \mathcal{M}^{*}$ we write, 
	\begin{equation}
	\begin{split}
	\tilde{g} \circ m \circ \tilde{g}^{-1}(x) &= \big(g \circ a^{-1}\big) \circ m \circ \big(a \circ g^{-1}\big)(x) \\ 
	&   = \big(g \circ a^{-1}\big)\circ a \circ m  \big(g^{-1}(x)\big)\quad \text{(since}\; a \; \text{commutes with each} \; m \in \mathcal{M}\text{)}  \\ 
	& = g \circ m \circ g^{-1}(x)\quad\text{for all $m$}
	\end{split}
	\end{equation}
	Observe that $\tilde{g}$ is both a bijection and satisfies the observation identity in \eqref{eqn:identity}. Therefore, $\tilde{g} \in\mathcal{G}_{\mathsf{id}}^{*}$. This proves the second part of the claim. Therefore, 
	$\mathcal{G}_{\mathsf{id}}^{*} = \mathcal{G}_{\mathsf{eq}}^{*}$. 
	
\end{proof}

\subsection{Proof of Theorem \ref{theorem3}}

\label{sec: proof_thm3}

\begin{proof}
	We showed in Theorem \ref{theorem_1} that the only source of non-identifiability are the bijections, $a$, in the set $\mathcal{E}$; our task here is to explicitly find all of these bijections for affine mechanisms.
	If $a\in \mathcal{E}$, then it satisfies $a\circ m = m\circ a$. We replace $m$ with the affine mechanism to obtain the following condition. For each $z \in \mathbb{R}^{d}$
	
	\begin{equation}
	a(Mz + b) = Ma(z) +b 
	\label{eqn: identity_re}
	\end{equation}
	
	Next, recall that $a:R^{d}\rightarrow R^{d}$; we take gradient of the function in the LHS and RHS of the above \eqref{eqn: identity_re} separately w.r.t $z$. Consider the $j^{th}$ component of $a(Mz+b)$ denoted as $a_{j}(Mz+b)$. We first take the gradient of $a_{j}(Mz+b)$ w.r.t $z$
	
	\begin{equation}
	\nabla_z a_j(Mz+b) = \Big(\frac{dy}{dz}\Big)^{\mathsf{T}}\nabla_y a_j(y),
	\end{equation}
	where $y=Mz+b$, $\nabla_y a_j(y)$ is the gradient of $a_j$ w.r.t $y$ and  $\frac{dy}{dz}$ denotes the Jacobian of $y$ w.r.t $z$. We simplify the above further to get
	\begin{equation}
	\nabla_z a_j(Mz+b)  = M^{\mathsf{T}}\nabla_y a_j(y) = M^{\mathsf{T}}\nabla_{y} a_j(Mz+b) 
	\end{equation}
	
	We can write the above for each component of $a$ as follows.  
	
	\begin{equation}
	\begin{split}
	&  \big[\nabla_z a_1(Mz+b), \cdots, \nabla_z a_{d}(Mz+b)\big] = \big[M^{\mathsf{T}}\nabla_{y} a_1(Mz+b), \cdots,  M^{\mathsf{T}}\nabla_{y} a_d(Mz+b)\big] \\ 
	&= M^{\mathsf{T}} [\nabla_{y} a_1(Mz+b), \cdots,  \nabla_{y} a_d(Mz+b)]  = M^{\mathsf{T}}J^{\mathsf{T}}(Mz+b),
	\label{eqn: grad_LHS}
	\end{split}
	\end{equation}
	where $J(Mz+b)$ is the Jacobian of $a$ computed at $Mz+b$. Next, we take the gradient of the $j^{th}$ component of the RHS in \eqref{eqn: identity_re} and let $m_j$ denote the $j^{\text{th}}$ column of $M$,
	\begin{equation}
	\nabla_{z}\big[ m_{j}^{\mathsf{T}}a(z) + b_j ] = \sum_{i}m_{ji}\nabla_{z}a_i(z) =\big[\nabla a_{1}(z), \cdots, \nabla a_{d}(z)\big] \begin{bmatrix}
	m_{j1} \\
	m_{j2} \\ 
	\vdots \\ 
	m_{jd}  
	\end{bmatrix}  
	\end{equation}
	
	We can write the above for each component in the RHS of \eqref{eqn: identity_re} as follows
	\begin{equation}
	\Big[ \nabla_{z}\big[m_{1}^{\mathsf{T}}a(z) + b_1\big], \cdots,  \nabla_z \big[m_{d}^{\mathsf{T}}a(z) + b_d\big]\Big]  = \big[\nabla a_{1}(z), \cdots, \nabla a_{d}(z)\big] \begin{bmatrix}
	m_{11}, \cdots, m_{d1}  \\
	m_{12}, \cdots, m_{d2} \\ 
	\vdots  \;\;\;\;\; \;\; \;\;\;  \vdots      \\ 
	m_{1d}, \cdots, m_{dd}   
	\end{bmatrix}   = J^{\mathsf{T}}(z)M^{\mathsf{T}}
	\label{eqn: grad_RHS}
	\end{equation}

	We equate the gradient of LHS and RHS in \eqref{eqn: identity_re} using the expressions derived in  \eqref{eqn: grad_LHS} and \eqref{eqn: grad_RHS} to obtain
	\begin{equation} 
	\begin{split}
	a(Mz + b) = Ma(z) +b \implies M^{\mathsf{T}} J^{\mathsf{T}}(Mz + b) - J^{\mathsf{T}}(z)M^{\mathsf{T}} = 0 
	\end{split}
	\label{theorem3_proof:eqn2_n1}
	\end{equation}
	We write the same expression at another offset $b^{'}\not=b$ below
	\begin{equation} 
	\begin{split}
	a(Mz + b^{'}) = Ma(z) +b \implies M^{\mathsf{T}} J^{\mathsf{T}}(Mz + b^{'}) - J^{\mathsf{T}}(z)M^{\mathsf{T}} = 0 
	\end{split}
	\label{theorem3_proof:eqn2_n2}
	\end{equation}
	
	Taking the difference of  \eqref{theorem3_proof:eqn2_n1} and \eqref{theorem3_proof:eqn2_n2} we get $M^{\mathsf{T}} J^{\mathsf{T}}(Mz + b) = M^{\mathsf{T}} J^{\mathsf{T}}(Mz + b^{'})$. Since $M$ is invertible, we get $J(Mz+b) = J(Mz+b^{'})$.
	Consider row $j$ of this identity. For each $z \in \mathbb{R}^{d}$
	
	\begin{equation}
	\nabla a_j(Mz+b) - \nabla a_j(Mz+b^{'}) =0 \implies  \nabla a_j(\tilde{z}) - \nabla a_j(\tilde{z}+b^{'}-b) =0  \implies \begin{bmatrix}\nabla^{2}_{1}a_{j}(\theta_1) \\ 
	\nabla^{2}_{2}a_{j}(\theta_2)  \\
	\vdots \\ 
	\nabla^{2}_{d}a_{j}(\theta_d) \end{bmatrix}(b-b^{'}) = 0
	\label{theorem3_proof:eqn2_n3}
	\end{equation}
	where $\nabla^{2} a_j$ is the Hessian of $a_j$ and  $   \nabla^{2}_{k}a_{j}(\theta_k)$ corresponds to the $k^{th}$ row of the Hessian matrix. Note that in the above expansion there is a different $\theta_k$ for each row (mean value theorem applied to each component of $\nabla a_j$ yields a different point $\theta_k$ on the line joinining $\tilde{z}$ and $\tilde{z} + b -b^{'}$. From Assumption \ref{assum: analytic_measure} and based on the fact that $M$ is invertible, it follows that $\nabla^{2}_{k}a_{j}(\theta_{k})(b-b^{'})= 0$ over a measurable set. Since $a_j$ is analytic $\nabla^{2}_{k}a_{j}(z)(b-b^{'})$ is also analytic. Therefore, from \citep{mityagin2015zero},  we can conclude that  $\nabla^{2}_{k}a_{j}(z)(b-b^{'}) = 0$ for all $z$. We can make the same argument for each component $k$ and conclude that  $\nabla^{2}a_{j}(z)(b-b^{'}) = 0$. From Assumption \ref{assum: distinc_b}, it follows that 
	$\nabla^{2}a_j(z)(b^{j}-b^{1}) = 0$ for all $j \in \{2, \cdots, d+1\}$ and since the set $\{b^{2}-b^{1}, \cdots, b^{d+1}-b^{1}\}$ is linearly independent $\nabla^{2}a_j(z)=0$ for all $z$.  This implies $a(z) = Az+p$. Plug $a(z) = Az + p$ into $ a(Mz + b) = Ma(z) +b $ to get 
	\begin{equation}
	A(Mz + b) +p= MA(z + p) +b  \implies (AM - MA)z + (A-I)b +(I-M)p =0 
	\label{eqn: linoff1}
	\end{equation}
	We write the same expression for offset $b^{'}$
	\begin{equation}
	A(Mz + b^{'}) +p= MA(z + p) +b^{'}  \implies (AM - MA)z + (A-I)b^{'} +(I-M)p =0 
	\label{eqn: linoff2}
	\end{equation}
	We take the difference of \eqref{eqn: linoff1} and \eqref{eqn: linoff2} to get 
	\begin{equation}
	(A-I)(b-b^{'}) = 0
	\label{eqn:nonlin1}
	\end{equation}
	Substitute $z=0$ in \eqref{eqn: linoff1} to get %
	\begin{equation}
	(A-I)b +  (I-M)p =0 
	\label{eqn:nonlin3}
	\end{equation}
	Substitute the above condition in \eqref{eqn:nonlin3} into \eqref{eqn: linoff1} to get the following. For each $z$
	\begin{equation}
	(AM- MA)z = 0 
	\label{eqn:nonlin2}
	\end{equation}
	We can now leverage the proofs from the linear setting in Section \ref{sec:linear}. The above  \eqref{eqn:nonlin2} is the same as \eqref{theorem2: eqn4} and the \eqref{eqn:nonlin1} is the same as \eqref{theorem2: eqn7}, with $b$ replaced by $b-b^{'}$. 
	Following the same analysis as before, 
	we get that latent variables are exactly identified;  we show all the steps below for completeness. By choosing $d$ linearly independent $z$ and substituting in \eqref{eqn:nonlin2} we get the following,
	\begin{equation}
	\begin{split}
	& MA = AM\\
	&     M = AMA^{-1}, \\ 
	&  S \Lambda S^{-1}  = A S\Lambda S^{-1} A^{-1}, \\ 
	& \Lambda  = \Big(S^{-1} A S\Big) \Lambda \Big(S^{-1} A^{-1}S\Big), \\ 
	& \Lambda = C \Lambda C^{-1}, \\ 
	& \Lambda C = C \Lambda,
	\end{split}
	\label{theorem2: eqn5_nlin}
	\end{equation}
	where $C= S^{-1}AS$, $M = S \Lambda S^{-1}$, $\Lambda = \mathsf{diag}\Big(\lambda_1, \cdots, \lambda_d\Big)$. We further simplify  \eqref{theorem2: eqn5} and compare each element of the matrix $\Lambda C$ and $C \Lambda$ to get the following condition. For all $i, j \in \{1, \cdots, d\}$ 
	\begin{equation}
	\begin{split}
	&    [\Lambda C]_{ij} = C_{ij}\lambda_{i},  \;\; [C\Lambda]_{ij} = C_{ij}\lambda_{j} \\ 
	& C_{ij}(\lambda_{i} -\lambda_{j})=0 
	\end{split} 
	\label{theorem2: eqn6_nlin}
	\end{equation}
	Consider the above \eqref{theorem2: eqn6_nlin} for $i \not=j$ to get 
	\begin{equation}
	C_{ij}(\lambda_{i} -\lambda_{j}) = 0 \implies C_{ij}=0 \; \text{(we use the assumption that } \lambda_{i} \not= \lambda_{j})
	\end{equation}
	From the above, it follows that $C$ is a diagonal matrix. We obtain an expression for $A$ in terms of $C$ matrix below. 
	\begin{equation} 
	\begin{split}
	&    S^{-1}AS = C  \\ 
	& A = S CS^{-1}
	\end{split}
	\end{equation}

	From \eqref{eqn:nonlin1} we get that 
	\begin{equation}
	\begin{split}
	& (G - \tilde{G})(b-b^{'}) =0 \overset{\tilde{G} = GA}{\implies} (I-A)(b-b^{'})=0 \implies (I-SCS^{-1})b=0 \\
	& \underbrace{\implies}_{\text{Left multiply } S^{-1}}  S^{-1}(b-b^{'}) - CS^{-1}(b-b^{'}) \implies  (C-I)S^{-1}(b-b^{'})=0 
	\label{theorem2: eqn7_nlin}
	\end{split}
	\end{equation}
	Since $C$ is a diagonal matrix, we can simplify the above condition further to get  $(c_{ii}-1)(S^{-1}(b-b^{'}))_{i}=0$. 
	Since $(S^{-1}(b-b^{'}))_{i}\not=0 \implies c_{ii}=1$, we obtain $C=I \implies A=I \implies a(z) = z+p$.  This proves that the latents are identified up to an offset. 
\end{proof}

\subsection{Proof of Theorem \ref{theorem4}}
\label{sec: theorem4_proof}

\begin{proof}

	We first show that $\tilde{\mathcal{G}}_{\mathsf{id}} \subseteq \tilde{\mathcal{G}}_{\mathsf{eq}}$. Consider a $\tilde{g} \in \tilde{\mathcal{G}}_{\mathsf{id}}$.
	We rewrite  \eqref{eqn: evolution_identity_unknown_m2} below. For all $x \in \mathcal{X}$
	\begin{equation}
	\begin{split} 
	&g\circ m_{t} \circ g^{-1}(x) =  \tilde{g}\circ \tilde{m}_{t} \circ \tilde{g}^{-1}(x)   \\
	& (\tilde{g}^{-1}  \circ g)\circ (m_{t} \circ g^{-1}(x)) = \tilde{m}_{t} \circ \tilde{g}^{-1}(x)  
	\end{split}
	\end{equation}
	
	Since $g$ is bijective, we can write $x = g(z)$ to get 
	\begin{equation}
	\begin{split}
	& (\tilde{g}^{-1}  \circ g)\circ m_{t}(z) = \tilde{m}_{t} \circ \tilde{g}^{-1} \circ g(z)  \\
	\end{split}
	\end{equation}
	Since the above equality holds for all $z\in \mathcal{Z}$ 
	we can conclude that 
	\begin{equation}
	\begin{split}
	&    (\tilde{g}^{-1} \circ g)\circ m_{t} =  \tilde{m}_{t} \circ (\tilde{g}^{-1} \circ g), \\
	&   a \circ m_t = \tilde{m}_t \circ a, 
	\label{equation1:proof2}
	\end{split}
	\end{equation}
	The above conclusion in \eqref{equation1:proof2} holds for all $m_t \in \mathcal{M}^{*}$. Therefore, $a$ in  \eqref{equation1:proof2} and $\{\tilde{m}_{t}\}_{t=1}^{T}$ (where $\tilde{m}_t \in \mathcal{M})$ together satisfy the condition that for all $m \in \mathcal{M}^{*}$,
	$a \circ m = \tilde{m} \circ a$, where $\tilde{m} \in \mathcal{M}$. We can rewrite $\tilde{g}^{-1} \circ g =a $ as $\tilde{g} = g \circ a^{-1}$. From this it follows that $\tilde{g} \in \tilde{\mathcal{G}}_{\mathsf{eq}}$. This proves the first part of the theorem. 
	


	Now let us consider the second part of the theorem. Consider a $\tilde{g} \in \tilde{\mathcal{G}}_{\mathsf{eq}}$. We can write $\tilde{g} = g \circ a^{-1}$, where $a \in \tilde{\mathcal{E}}$.  At time $t$, some mechanism $m_t \in \mathcal{M}^{*}$ is used to transform the latents.  Since $a \in \tilde{\mathcal{E}}$, select the mechanism $\tilde{m}_t \in \mathcal{M}$ for which $a \circ m_t = \tilde{m}_t \circ a$ and as a consequence
	\begin{equation}
	g \circ m_t \circ g^{-1} = g \circ a^{-1} \circ \tilde{m}_t \circ a\circ  g^{-1} = \tilde{g} \circ \tilde{m}_t  \circ \tilde{g}^{-1}
	\end{equation}
	In the first equality above, we use $a \circ m_t = \tilde{m}_t \circ a$  and in the second equality we use the definition of $\tilde{g}$. We can repeat the above exercise for all $t$ and corresponding $m_t$ using the same $a$. Therefore, $\tilde{g}$ is in $\tilde{\mathcal{G}}_{\mathsf{id}}$. This shows the second part of the theorem, i.e., $\tilde{\mathcal{G}}_{\mathsf{eq}} \subseteq \tilde{\mathcal{G}}_{\mathsf{id}}$.


\end{proof}

\subsection{Leveraging Theorem \ref{theorem4} when mechanism is linear and $g$ is non-linear}
\label{sec:ill_imitator}
We write the data generation process as follows. For each $t\in \{1, \cdots, d+1\}$

\begin{equation}
\begin{split}
z_{t+1} = M_tz_{t} + b_{t}, \\ 
x_{t} = g(z_{t}),
\end{split}
\label{eqn:lin_nlin_dgp2}
\end{equation}

Let us construct the set $\tilde{\mathcal{E}}$ corresponding to the above setting. For each $z \in \mathbb{R}^{d}$,

\begin{equation}
\begin{split}
& a (Mz + b) = M^{'}a(z) + \tilde{b} \implies M^{\mathsf{T}}J^{\mathsf{T}}(Mz + b) - J^{\mathsf{T}}(z)M^{',\mathsf{T}} =0, \\ 
& a (Mz + b^{'}) = M^{'}a(z) + \tilde{b}^{'} \implies M^{\mathsf{T}}J^{\mathsf{T}}(Mz + b^{'}) - J^{\mathsf{T}}(z)M^{',\mathsf{T}} =0, \\ 
\label{eqn:ill1}
\end{split}
\end{equation}
where $(M,b)$ and $(M,b^{'})$ are the true mechanisms and $(M^{'},b^{'})$ and $(M',\tilde{b}^{'})$ are the imtitating mechanisms chosen by the learner, $J$ is the Jacobian of $a$. Note here the learner only exploits the knowledge that $b$ changes to $\tilde{b}$, which is why it keeps $M^{'}$ fixed and only changes the offset. We take the difference of the RHS in the above two equations to get 
\begin{equation}
M^{\mathsf{T}}J^{\mathsf{T}}(Mz + b) = M^{\mathsf{T}}J^{\mathsf{T}}(Mz + b^{'}) 
\end{equation}

Since $M$ is invertible we get $J(Mz + b) - J(Mz+b^{'}) =0$ for all $z$. We can follow the same justification as was used in \eqref{theorem3_proof:eqn2_n2} to conclude that $J(z)$ is constant and $a$ is thus an affine map. We substitue the affine map $a(z)= Az+p$ back into \eqref{eqn:ill1} to get the following. For all $z$
\begin{equation}
\begin{split}
AMz + Ab^{j} + p = M^{'}Az + b^{',j} + M^{'}p
\end{split}
\end{equation}
Substitute $z=0$ to get $ b^{',j} = Ab^{j}+p - M^{'}p $. Substitute this condition back into the above equation, we get $AM = M^{'}A \implies M^{'}= AMA^{-1}$. 

\subsection{Proof of Theorem \ref{theorem6}}
\label{sec: stoch_proof}
Before stating the proof of Theorem \ref{theorem6}, we state two existing results that we use. 

\textbf{Result 1. (Change of variables formula \citep{degroot2012probability})} Given a continuous random variable $X \in \mathbb{R}^d$ with pdf $p_X$ and its transformation $Y=f(X)$, where $f:\mathbb{R}^{d} \rightarrow \mathbb{R}^d$ is a diffeomorphism,\footnote{\url{http://math.mit.edu/~larsh/teaching/F2007/handouts/changeofvariables.pdf}} then 
$p_Y(f(x)) |\mathsf{det}(J_{f}(x))| = p_X(x) $, 
where $J_f$ is the Jacobian of $f$ computed at $x$. 

\begin{lemma} \label{lemma1} If $X$ and $Y$ are two continuous random variables that take values in $\mathbb{R}^d$ that are equal in distribution, i.e., $X\stackrel{d}{=}Y$. If $f:\mathbb{R}^{d}\rightarrow \mathbb{R}^d$ is a diffeomorphism, then $f(X) \stackrel{d}{=} f(Y)$. 
\end{lemma}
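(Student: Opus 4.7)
The plan is to apply Result 1 (the change of variables formula) directly to both $X$ and $Y$, turning the distributional equality $X \stackrel{d}{=} Y$ into an equality of pdfs for the transformed random variables. First I would invoke the hypothesis that $f$ is a diffeomorphism, which guarantees that Result 1 applies with a smooth, everywhere-invertible Jacobian. Then I would write $p_{f(X)}(f(x)) = p_X(x)/|\det J_f(x)|$ and, with the same Jacobian factor, $p_{f(Y)}(f(x)) = p_Y(x)/|\det J_f(x)|$.

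The key step is then to observe that $X \stackrel{d}{=} Y$ means $p_X = p_Y$ (almost everywhere), which immediately forces the two expressions above to coincide for every $x \in \mathbb{R}^d$. Finally, since $f$ is a bijection of $\mathbb{R}^d$ onto $\mathbb{R}^d$, letting $x$ range over $\mathbb{R}^d$ makes $f(x)$ sweep out all of $\mathbb{R}^d$, so equality of the densities at every point $f(x)$ upgrades to equality of densities on the whole space, which is exactly $f(X) \stackrel{d}{=} f(Y)$.

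I do not anticipate any substantive obstacle here: the diffeomorphism hypothesis supplies everything (smoothness of $f$ and $f^{-1}$, invertibility of the Jacobian, surjectivity) needed to apply Result 1 on both sides of the identity, and the argument is essentially a one-line reduction once the formula has been written out. As a sanity check one can give a cleaner measure-theoretic argument that avoids densities altogether: for any Borel set $B \subseteq \mathbb{R}^d$, continuity of $f^{-1}$ makes $f^{-1}(B)$ Borel, and then $\mathbb{P}(f(X) \in B) = \mathbb{P}(X \in f^{-1}(B)) = \mathbb{P}(Y \in f^{-1}(B)) = \mathbb{P}(f(Y) \in B)$, where the distributional equality $X \stackrel{d}{=} Y$ is used only in the middle equality. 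Either route establishes the claim; I would present the density-based version since it lines up directly with the preceding statement of Result 1 and is the form the subsequent proof of Theorem \ref{theorem6} will reuse.
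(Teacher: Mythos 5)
Your density-based argument is correct and is essentially the same as the paper's proof: both apply Result 1 to $f(X)$ and $f(Y)$ separately, use $p_X = p_Y$, and compare the resulting densities (the paper parametrizes by $w = f(x)$ and writes the Jacobian of $f^{-1}$, but this is the same identity). Your supplementary measure-theoretic argument via $\mathbb{P}(f(X)\in B) = \mathbb{P}(X \in f^{-1}(B))$ is a clean and slightly more general alternative, but the core route matches the paper.
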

\begin{proof}
	Since $X$ and $Y$ are equal in distribution, they have the same pdfs, i.e. $p_X(x) = p_Y(x)$ for all $x \in \mathbb{R}^{d}$. We can use the change of variables formula in Resut 1 above to get the following. Let $W = f(X)$, $p_X(f^{-1}(w)) |\mathsf{det}(J_{f^{-1}}(w))| = p_W(w) $ and let $V = f(Y)$, $p_Y(f^{-1}(v)) |\mathsf{det}(J_{f^{-1}}(v))| = p_V(v)$. Comparing the two expressions when $w=v$ we get $p_W(w) = p_V(w)$. This proves the result. 
\end{proof}

We stated Result 1 and Lemma 1 for continuous random variables. When the random variables are discrete, Lemma \ref{lemma1} holds for any function $f$. 

\begin{lemma}\label{lemma2}
	\citep{kass2011geometrical} If $f:\mathcal{Z} \rightarrow \mathcal{Z}$ and $g:\mathcal{Z}\rightarrow \mathcal{Z}$ are diffeomorphisms, then $f\circ g$ is a diffeomorphism.
\end{lemma}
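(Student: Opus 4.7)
The plan is to verify the three properties that constitute a diffeomorphism in turn: bijectivity, smoothness, and invertibility of the Jacobian (equivalently, smoothness of the inverse). None of these steps requires any machinery beyond elementary calculus and the chain rule, so this is essentially a bookkeeping exercise; the main care needed is to be explicit about what definition of ``diffeomorphism'' is in force (the paper's definition A.1 of \citet{kass2011geometrical}: a smooth bijection with everywhere-invertible Jacobian).

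First I would establish bijectivity of $f \circ g$. Since $f$ and $g$ are each bijections $\mathcal{Z} \to \mathcal{Z}$, their composition is a bijection with inverse $(f \circ g)^{-1} = g^{-1} \circ f^{-1}$; this is purely set-theoretic. Next I would verify smoothness of $f \circ g$. Since $g$ is smooth on $\mathcal{Z}$ and $f$ is smooth on $g(\mathcal{Z}) = \mathcal{Z}$, the chain rule implies that $f \circ g$ is smooth and its Jacobian at any $z \in \mathcal{Z}$ is given by
\begin{equation}
J_{f \circ g}(z) \;=\; J_f\bigl(g(z)\bigr)\, J_g(z).
\end{equation}
Finally, to get invertibility of the Jacobian, I would observe that by the diffeomorphism hypotheses, both $J_f(g(z))$ and $J_g(z)$ are invertible square matrices; since the product of invertible matrices is invertible (and $\det(J_{f \circ g}(z)) = \det(J_f(g(z)))\det(J_g(z)) \neq 0$), the Jacobian of $f \circ g$ is invertible at every $z \in \mathcal{Z}$.

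As an alternative packaging (more convenient if ``diffeomorphism'' is taken to mean ``smooth bijection with smooth inverse''), I would instead argue that $(f \circ g)^{-1} = g^{-1} \circ f^{-1}$ is a composition of smooth maps and hence smooth, which together with bijectivity gives the result directly. There is really no hard step here; the only thing to watch out for is to make sure the domain/codomain of each composition matches so that the chain rule is applied legitimately, and to note that since $\mathcal{Z}$ is the common domain and codomain of both $f$ and $g$, the composition is well-defined on all of $\mathcal{Z}$.
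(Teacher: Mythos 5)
Your proof is correct, but note that the paper never proves this lemma at all: it is stated purely as a cited result from \citet{kass2011geometrical} (the proof environment that follows it in the appendix belongs to Theorem \ref{theorem6}, not to the lemma). Your argument---composition of bijections is a bijection, the chain rule gives smoothness with $J_{f\circ g}(z)=J_f(g(z))\,J_g(z)$, and the product of invertible matrices is invertible---is the standard textbook proof and correctly discharges every clause of the definition the paper adopts (a smooth bijection with invertible Jacobian), so it validly fills in what the paper delegates to the reference; your alternative packaging via $(f\circ g)^{-1}=g^{-1}\circ f^{-1}$ is equally sound.
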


\begin{proof}
	We first show that $\mathcal{G}_{\mathsf{id}}^{\mathsf{s}} \subseteq \mathcal{G}_{\mathsf{eq}}^{\mathsf{s}}$.  
	
	From the observation identity in  \eqref{eqn:identity_stochastic} we get that $\tilde{g}, \{\tilde{m}_t\}_{t=1}^{T}$ satisfy the following for all $t\in \{1, \cdots, T\}$

	\begin{equation}
	\begin{split}
	& g \circ m_t \big(g^{-1}(x_t), U_t\big) \stackrel{d}{=}    \tilde{g} \circ \tilde{m}_{t} \big(\tilde{g}^{-1}(x_t), \hat{U}_t \big)  \\
	& \Big(\tilde{g}^{-1}\circ g\Big) \circ m_t \big(g^{-1}(x_t), U_t\big) \stackrel{d}{=}   \tilde{m}_{t} \big(\tilde{g}^{-1}(x_t), \hat{U}_t\big) \\ 
	\end{split}
	\label{eqn1: stoch_proof}
	\end{equation}

	In the second step in the above equation, we transformed the random variables in the first step using the same transform $\tilde{g}^{-1}$.  $\tilde{g}^{-1}$ is a diffeomorphism; we compose both sides of the first step LHS and RHS with $\tilde{g}^{-1}$. We use Lemma \ref{lemma1} to get from the first step to the second step in the above equation \eqref{eqn1: stoch_proof}.   In the above equation $x_{t}$ is a fixed value and the only source of randomness is from $U_t$ in LHS and $\hat{U}_t$ in the RHS. We substitute $x_t = g(z_t)$ to further simplify the above expression in \eqref{eqn1: stoch_proof}

	\begin{equation}
	\begin{split}
	&  \Big(\tilde{g}^{-1}\circ g\Big) \circ m_t \big(g^{-1}\circ g(z_t), U_t\big) \stackrel{d}{=}   \tilde{m}_{t} \big(\tilde{g}^{-1}\circ g(z_t), \hat{U}_t\big) \\ 
	& \Big(\tilde{g}^{-1}\circ g\Big) \circ m_t \big(z_t, u_t\big) \stackrel{d}{=}   \tilde{m}_{t} \big(\tilde{g}^{-1}\circ g(z_t), \hat{U}_t\big)  \\
	\end{split}
	\end{equation}
	
	Substitute $a = \tilde{g}^{-1}\circ g$ in the above to get the following
	\begin{equation}
	\begin{split}
	&  a\circ m_t \big(z_t, U_t\big)  \stackrel{d}{=}   \tilde{m}_{t} \big(a (z_t), \hat{U}_t\big) \\ 
	& a\circ m_t \big(z_t, U_t\big)  \stackrel{d}{=}   \tilde{m}_{t} \big(a (z_t), U_t\big)
	\end{split}
	\end{equation}
	From Lemma \ref{lemma2} it follows that $a$ in the above is a diffeomorphism. Since we assume that $\cup_{t=1}^{T}\{m_t\} = \mathcal{M}^{*}$ it follows that $a$ in \eqref{equation1:proof2} and $\{\tilde{m}_{t}\}_{t=1}^{T}$ (where $\tilde{m}_t \in \mathcal{M}$) together satisfy the condition for membership in $\mathcal{E}^{\mathsf{s}}$. Since $\tilde{g} =g \circ a^{-1}$ we obtain that $\tilde{g} \in \mathcal{G}_{\mathsf{eq}}^{\mathsf{s}}$. 
	
	We now show that $\mathcal{G}_{\mathsf{eq}}^{\mathsf{s}} \subseteq \mathcal{G}_{\mathsf{id}}^{\mathsf{s}}$. Consider a $\tilde{g} \in \mathcal{G}_{\mathsf{eq}}^{\mathsf{s}}$. We use  $\tilde{g} = g \circ a^{-1}$, where $a\in \mathcal{E}^{\mathsf{s}}$ to simplify the following random variable
	
	\begin{equation}
	\begin{split}
	& g \circ m_t (g^{-1}(x_t), U_t) =   \Big(g \circ a^{-1} \circ a \Big)\circ m_t (g^{-1}(x_t), U_t) = \tilde{g} \circ a \circ m_t (g^{-1}(x_t), U_t)
	\end{split}
	\label{eqn2: stoch_proof}
	\end{equation}
	
	Since $a\in \mathcal{E}^{\mathsf{s}}$ we have  $$a \circ m_t (g^{-1}(_t), U_t \big)   \stackrel{d}{=} \tilde{m}_t \big( a\circ g^{-1}(x_t), U_t \big)$$  
	
	From Lemma \ref{lemma2} it follows that $\tilde{g}$ is a diffeomorphism. From Lemma \ref{lemma1} it follows that 
	
	\begin{equation}
	\begin{split}
	&\tilde{g} \circ a \circ m_t (g^{-1}(x_t), U_t) \stackrel{d}{=} \tilde{g} \circ \tilde{m}_{t} (a\circ g^{-1}(x_t), U_t) = \tilde{g} \circ \tilde{m}_{t} (\tilde{g}^{-1}(x_t), U_t)
	\end{split}
	\label{eqn3: stoch_proof}
	\end{equation}
	
	Combining \eqref{eqn2: stoch_proof} and \eqref{eqn3: stoch_proof} and using the fact that $\hat{U}_t \stackrel{d}{=} U_t$ we get
	
	\begin{equation}
	g \circ m_t (g^{-1}(x_t), U_t) \stackrel{d}{=} \tilde{g} \circ \tilde{m}_{t} (\tilde{g}^{-1}(x_t), U_t) \stackrel{d}{=} \tilde{g} \circ \tilde{m}_{t} (\tilde{g}^{-1}(x_t), \hat{U}_t)
	\end{equation} 
	
	From the definition of $\mathcal{E}^{\mathsf{s}}$ it follows with the same choice of $a$ the condition continues to hold for all $m_{t} \in \mathcal{M}^{*}$. Therefore,  $\tilde{g} \in \mathcal{G}_{\mathsf{id}}^{\mathsf{s}}$. This proves the second part of the theorem. 
	
\end{proof}

\subsection{Proof of Theorem \ref{theorem7}}
\label{proof_klindt}

\begin{proof}
	In Theorem \ref{theorem6}, we showed that all the solutions to the observation identity in \eqref{eqn:identity_stochastic} can be characterized in terms of the equivariances in distribution defined by the set $\mathcal{E}^{\mathsf{s}}$. Let us analyze the set $\mathcal{E}^{\mathsf{s}}$ for the class of mechanisms considered in \cite{klindt2020towards}. Consider a $a \in \mathcal{E}^{\mathsf{s}}$. For each $z \in \mathbb{R}^{d}$
	\begin{equation}
	\begin{split}
	& a(z+V)   \stackrel{d}{=}  a(z ) + \hat{V}, \\
	\end{split}
	\label{eqn: equivcondn}
	\end{equation}
	Define $\hat{Y} = a(z ) + \hat{V}$.  Since $\hat{V} \stackrel{d}{= } V$ we write the probability density function (pdf) of $\hat{Y}$ as 
	\begin{equation}
	f_{\hat{Y}}(y) = f_{V}(y-a(z))
	\label{eqn: pdf_haty}
	\end{equation}

	Define $Y = a(z+V)$. $a:\mathbb{R}^{d} \rightarrow \mathbb{R}^{d}$ is a diffeomorphism. We use the change of variables result (Result 1) to write the pdf $Y$ as follows. For each $y \in \mathbb{R}^{d}$
	
	\begin{equation}
	\begin{split}
	f_{Y}(y) = \frac{1}{\Big|\mathsf{det}\big(J\big(a^{-1}(y)\big)\big)\Big|} f_{V}\big(a^{-1}(y)-z\big),
	\label{eqn: pdf_y}
	\end{split}
	\end{equation}
	where $J(a^{-1}(y))$ is the Jacobian of $a$ computed at $a^{-1}(y)$, and $\mathsf{det}$ is the determinant. 
	
	We substitute \eqref{eqn: pdf_haty} and \eqref{eqn: pdf_y} in the equivariance condition in \eqref{eqn: equivcondn} to obtain the following. For each $y \in \mathbb{R}^{d}$ 
	
	\begin{equation}
	\begin{split}
	&    Y \stackrel{d}{= } \hat{Y}  \\ 
	&    f_{V}(y-a(z)) = \frac{f_{V}(a^{-1}(y)-z)}{|\mathsf{det}(J(a^{-1}(y)))|} \\
	&  f_{V}(a(w)-a(z)) = \frac{f_{V}(w-z)}{|\mathsf{det}(J(w))|}, 
	\label{eqn: cond_eq}
	\end{split}
	\end{equation}
	where $w = a^{-1}(y)$.
	In the above we equated the conditionals for each $z$, we now equate the marginals. 
	\begin{equation}
	\begin{split}
	g \circ (Z+ V) \stackrel{d}{= }   \tilde{g} \circ (\hat{Z} +\hat{V}) \\ 
	a \circ (Z+V) \stackrel{d}{= }  \hat{Z} + \hat{V}
	\end{split}    
	\label{eqn: marg_eq}
	\end{equation}
	
	We  follow \cite{klindt2020towards} and assume  $\hat{Z} \stackrel{d}{=} Z$ and $\hat{V} \stackrel{d}{=} V$. Therefore, $Z+V \stackrel{d}{=}  \hat{Z} + \hat{V}$. 
	We use this condition to restate \eqref{eqn: marg_eq} as
	\begin{equation}
	a \circ (Z+V) \stackrel{d}{= }  Z+ V\implies a(W) \stackrel{d}{=} W,
	\label{eqn: marg_eq1}
	\end{equation}
	where $W=Z+V$.  We translate \eqref{eqn: marg_eq1} into the condition on the pdfs as follows. For each $w \in \mathbb{R}^{d}$
	
	\begin{equation}
	f_{W}(w)|\mathsf{det}(J(w))| =   f_{W}(w) \implies |\mathsf{det}(J(w))| =1
	\label{eqn: marg_eq2}
	\end{equation}

	Substituting the above equation \eqref{eqn: marg_eq2} into \eqref{eqn: cond_eq} we get
	
	\begin{equation}
	\begin{split}
	&  f_{V}(a(w)-a(z)) = \frac{f_{V}(w-z)}{|\mathsf{det}(J(w))|} \implies    f_{V}(a(w)-a(z)) = f_{V}(w-z) \\ 
	& \|a(w)-a(z)\|_{\alpha} = \|w-z\|_{\alpha},
	\end{split}
	\end{equation}
	where in the last condition in the above expression we exploit the fact that $f_{V}$ is a generalized Laplacian distribution.  From Mazur-Ulam theorem \cite{nica2013mazur} it follows that $a$ is affine.  We now write $a$  as a matrix $A$ with offset vector $q$ and simplify the condition in \eqref{eqn: equivcondn}. 
	
	For each $z\in \mathbb{R}^{d}$
	we have 
	\begin{equation}
	\begin{split}
	A(z+V) + q \stackrel{d}{=} Az + \hat{V} + q \\ 
	\implies AV \stackrel{d}{=} \hat{V}  \\
	\implies \mathbb{E}[AVV^{\mathsf{T}}A^{\mathsf{T}}] = \mathbb{E}[\hat{V}\hat{V}^{\mathsf{T}}] \implies AA^{\mathsf{T}} = \mathsf{I}
	\end{split}
	\end{equation}
	Since $A$ is a square matrix and $AA^{\mathsf{T}}$ it follows that $A^{\mathsf{T}}A = \mathsf{I}$. Therefore, $A$ is an orthonormal matrix. Observe that all the elements of $\hat{V}$ are independent. Since $AV \stackrel{d}{=} \hat{V} $ it follows that all the elements of $AV $ are also independent. Define $AV = Q$.  Observe that $A$ is an orthonormal matrix that is multiplied with a vector $V$  with all independent elements (each of which is non-Gaussian as $\alpha\not=2$) and outputs a vector that has all independent components.  From Theorem 11 in \cite{comon1994independent} we get that $A$ is a composition of permutation and scaling. Since $A$ is also orthonormal, each term in the diagonal scaling matrix can only be $1$ or $-1$. Therefore, $A = \Pi \Lambda$, where $\Pi$ is a permutation matrix and $\Lambda$ is a diagonal matrix with $+1, -1$ elements. Finally, $a(z) =\Pi \Lambda z + q $.
	
\end{proof}

\subsection{Alternative identification result for small transitions}
\label{proof_klindt_alternative}

Let us analyze the set $\mathcal{E}^{\mathsf{s}}$ for this class of mechanisms.  We assume that the learner knows that the mechanism is additive, and that the noise components are all independent. In the analysis below we consider bijections that are analytic (each component of the bijection is an analytic function). Consider an $a \in \mathcal{E}^{\mathsf{s}}$
\begin{equation}
\begin{split}
& a(z+V)   \stackrel{d}{=}  a(z ) + \hat{V}. \\ 
\end{split}
\label{eqn:jacob_id_0}
\end{equation}
We write the first-order approximation of the above identity below
\begin{equation}
\begin{split}
& a(z) + \mathsf{J}(z) V \stackrel{d}{=}  a(z ) + \hat{V} \\
&\mathsf{J}(z) V \stackrel{d}{=}  \hat{V} 
\end{split} 
\label{eqn:jacob_id}
\end{equation}
where $V  \stackrel{d}{\not=}  \hat{V}$.  Note that the set of solutions $a$ to \eqref{eqn:jacob_id_0} and \eqref{eqn:jacob_id} become equal in the limit of $\delta\rightarrow 0$, where $\delta$ is the bound on  each component of $|V|$. We analyze the solution to \eqref{eqn:jacob_id} below. 
\begin{equation}
\begin{split}
& \mathbb{E}[(\mathsf{J}(z) V)(\mathsf{J}(z) V)^{\mathsf{T}}] = \mathsf{J}(z)\mathbb{E}[V V^{\mathsf{T}}]\mathsf{J}(z)^{\mathsf{T}} = \sigma^2 \mathsf{J}(z)\mathsf{J}(z)^{\mathsf{T}} \\ 
&  \sigma^2 \mathbb{E}[\hat{V}\hat{V}^{\mathsf{T}}]  = \sigma^2\mathsf{I} \\ 
& \sigma^2 \mathsf{J}(z)\mathsf{J}(z)^{\mathsf{T}} =  \sigma^2\mathsf{I} \implies \mathsf{J}(z)\mathsf{J}(z)^{\mathsf{T}} = \mathsf{I} \implies \mathsf{J}(z)^{\mathsf{T}} \mathsf{J}(z) = \mathsf{I}
\end{split}
\end{equation}

Since $\mathsf{J}(z) V \stackrel{d}{=}  \hat{V} $ and each component of $\hat{V}$ is independent, we can deduce that all the components of $\mathsf{J}(z) V $ are independent as well.  From Theorem 11 in \cite{comon1994independent}, we can deduce that $\mathsf{J}(z)$ is composed of permutation times a diagonal matrix. Since the matrix is orthonormal, each scaling component can only be $\pm 1$. We can apply this same analysis at another point $\tilde{z}$ in the neighborhood of $z$ and continue to find that the jacobian matrix is a permutation times diagonal matrix (that describes sign flips). Note that the permutation matrix times scaling used to express the Jacobian cannot change between the points $\tilde{z}$ and $z$ (if it does change then that violates the Jacobian's continuity). Since the Jacobian is equal to a fixed permutation times a fixed scaling matrix over a neighborhood, we can extend this to the entire space (here we use the fact that the $a$ is analytic and \cite{mityagin2015zero}).

\subsection{Analyzing $\tilde{\mathcal{E}}$ when $\mathcal{M} = \mathcal{M}^{*}$}
\label{sec: imitator_analysis}

In this section, we analyze imitators when we know the set of mechanisms that are deployed, we do not know which mechanism is used when.
If $a\in \tilde{\mathcal{E}}$ and $\mathcal{M} = \mathcal{M}^{*}$. From the definition of $a$, it follows that for each $m \in \mathcal{M}^{*}$, $\exists\; m^{'} \in \mathcal{M}^{*}$ such that 
$a \circ m = m^{'} \circ a$. We claim that two distinct $m \in \mathcal{M}^{*}$ cannot share the same $m^{'}$ (imitator). 
Suppose there was a common $m^{'}$ imitating $m$ and $\tilde{m}$.
\begin{equation}
\begin{split}
a \circ m = m^{'} \circ a \\ 
a \circ \tilde{m} = m^{'} \circ a \\ 
\end{split}
\end{equation}
We take the difference of the above two equations to get 
\begin{equation}
a \circ m  = a \circ \tilde{m}
\end{equation}
Since $a$ is a bijection, we can conclude that $m =\tilde{m}$, which is a contradiction of the fact that $m$ and $\tilde{m}$ are distinct.

This claim implies that for a given $a$ there is an injective map from $\mathcal{M}^{*}$ to $\mathcal{M}^{*}$. If the set $\mathcal{M}^{*}$ is finite, then from Pigeonhole principle it follows that this injective map is a bijection.

Let us index the mechanism
$\mathcal{M}^{*} = \{m^{1} , \cdots, m^{n}\}$. We call the bijection map $\pi: \mathcal{M}^{*} \rightarrow \mathcal{M}^{*}$

Consider the element $i$. We claim $\exists \; l \in \{1,\cdots, n\}$ $\pi^{l}(i) =i$. We write the chain starting from $i$ as $i \rightarrow \pi(i) \rightarrow \pi^{2}(i), \cdots, \pi^{k}(i)$.   Since the chain ($\pi(i) \rightarrow \pi^{2}(i), \cdots, \pi^{k}(i)$) has $n$ steps there have to be at least two elements that are equal. 
Suppose $p>q$ and $\pi^{p}(i) = \pi^{q}(i)$
\begin{equation}
\pi^{p}(i) = \pi^{q}(i) \implies \pi^{p-1}(i) = \pi^{q-1}(i) \implies \cdots.. \pi^{p-q}(i) = i
\end{equation}
In the above at each step we use the fact that $\pi$ is a bijection and that shows the claim that $\pi^{l}(i) =i$. We now use this observation to carry out the following simplification 

\begin{equation}
\begin{split}
& m^{i} = a^{-1} \circ m^{\pi(i)} \circ a  \\
&m^{\pi(i)} = a^{-1} \circ m^{\pi^2(i)} \circ a  \\ 
& \vdots \\
& m^{\pi^{k}(i)} =  a^{-1} \circ m^{i} \circ a 
\end{split}
\end{equation}
Substituting the second equation $m^{\pi(i)}$ into first, and then the third $m^{\pi^2(i)}$ and so on we get
\begin{equation}
\begin{split}
m^{i} = a^{-k} \circ m^i \circ a^{k} 
\end{split}
\end{equation}

Therefore, for each $m^i$, $\exists $ $k$ such that 
$a^{k}$ is its equivariance.

To summarize, if $a \in \tilde{\mathcal{E}}$ and $\mathcal{M} = \mathcal{M}^{*}$, where $\mathcal{M}$ is a finite set, then for each mechanism $m\in \mathcal{M}$, $\exists k \in \{1, \cdots, |\mathcal{M}|\}$ such that $a^{k}$ is its equivariance.

\end{document}